%
%
%

\documentclass{svproc}
%
%

\usepackage{url}
\usepackage{amsmath}
\usepackage{amsfonts}
\usepackage{color}
\usepackage{array}
\usepackage{graphicx}
\graphicspath{ {./figs/} }
\usepackage{wrapfig}
\usepackage{mathtools}
\usepackage{pst-node}
\usepackage[ruled,linesnumbered]{algorithm2e}
\usepackage{multirow}
\setcounter{secnumdepth}{3}
\usepackage[noadjust]{cite}
\usepackage{chngcntr}
\usepackage{apptools}
\AtAppendix{\counterwithin{theorem}{section}}
\AtAppendix{\counterwithin{corollary}{section}}
\AtAppendix{\counterwithin{definition}{section}}
\usepackage{enumitem}

\raggedbottom

\usepackage{amsthm}
\usepackage{caption}
\usepackage{picins}
\captionsetup[figure]{labelfont=bf,textfont=it}

\newcommand{\task}{\Pi}

\newcommand{\traj}{\xi}
\newcommand{\state}{x}
\newcommand{\statespace}{\mathcal{X}}
\newcommand{\safeset}{\mathcal{S}}
\newcommand{\unsafeset}{\mathcal{A}}
\newcommand{\unsafesure}{{\mathcal{A}_l}}
\newcommand{\unsafesurelearned}{{\mathcal{A}_l^\textrm{rec}}}
\newcommand{\umax}{\Delta\state}
\newcommand{\sd}{\textsf{sd}}
\newcommand{\occ}{\mathcal{O}}
\newcommand{\occp}{\mathcal{\tilde O}}

\newcommand{\frs}{\textsf{FRS}}

\newcommand{\grid}{z}
\newcommand{\gridrv}{Z}
\newcommand{\gridspace}{\mathcal{Z}}
\newcommand{\reals}{\mathbb{R}}
\newcommand{\expectation}{\mathbb{E}}
\newcommand{\numsafe}{N_s}
\newcommand{\numunsafe}{N_{\neg s}}
\newcommand{\control}{u}
\newcommand{\controlset}{\mathcal{U}}
\newcommand{\trajset}{\mathcal{T}}
\newcommand{\constraintset}{\mathcal{C}}
\newcommand{\unsafetrajset}{\mathcal{T}_\mathcal{A}}
\newcommand{\dynfeasibleset}{\mathcal{D}^{\xi_{xu}}}
\newcommand{\controlfeasibleset}{\mathcal{U}^{\xi_{xu}}}
\newcommand{\dynproject}{D}

\newcommand{\trajxu}{\traj_{xu}}
\newcommand{\trajx}{\traj_\state}
\newcommand{\traju}{\traj_\control}
\newcommand{\taskspace}{\mathcal{P}}
\newcommand{\constraintspace}{\mathcal{C}}

\theoremstyle{remark}
\newtheorem*{rem}{Remark}


\begin{document}
\mainmatter              
\title{Learning Constraints from Demonstrations}
\titlerunning{Learning Constraints from Demonstrations}  
%
\author{Glen Chou, Dmitry Berenson, Necmiye Ozay}
\authorrunning{Glen Chou, Dmitry Berenson, Necmiye Ozay} 
%
\tocauthor{Glen Chou, Dmitry Berenson, Necmiye Ozay}
\institute{Dept. of Electrical Engineering and Computer Science, \\University of Michigan, Ann Arbor, MI, 48109, USA\\ \email{\{gchou,dmitryb,necmiye\}@umich.edu}}

\maketitle              

\begin{abstract}
We extend the learning from demonstration paradigm by providing a method for learning unknown constraints shared across tasks, using demonstrations of the tasks, their cost functions, and knowledge of the system dynamics and control constraints. Given safe demonstrations, our method uses hit-and-run sampling to obtain lower cost, and thus unsafe, trajectories. Both safe and unsafe trajectories are used to obtain a consistent representation of the unsafe set via solving an integer program. Our method generalizes across system dynamics and learns a guaranteed subset of the constraint. We also provide theoretical analysis on what subset of the constraint can be learnable from safe demonstrations. We demonstrate our method on linear and nonlinear system dynamics, show that it can be modified to work with suboptimal demonstrations, and that it can also be used to learn constraints in a feature space.
\keywords{learning from demonstration, machine learning, motion and path planning}
\end{abstract}
%

\vspace{-30pt}
\section{Introduction}
\vspace{-5pt}
Inverse optimal control and inverse reinforcement learning (IOC/IRL) \cite{irl_1, irl_2, lfd3, ng_irl} have proven to be powerful tools in enabling robots to perform complex goal-directed tasks. These methods learn a cost function that replicates the behavior of an expert demonstrator when optimized. However, planning for many robotics and automation tasks also requires knowing constraints, which define what states or trajectories are safe. For example, the task of safely and efficiently navigating an autonomous vehicle can naturally be described by a cost function trading off user comfort and efficiency and by the constraints of collision avoidance and executing only legal driving behaviors. In some situations, constraints can provide a more interpretable representation of a behavior than cost functions. For example, in safety critical environments, recovering a hard constraint or an explicit representation of an unsafe set in the environment is more useful than learning a ``softened" cost function representation of the constraint as a penalty term in the Lagrangian. Consider the autonomous vehicle, which absolutely must avoid collision, not simply give collisions a cost penalty. Furthermore, learning global constraints shared across many tasks can be useful for generalization. Again consider the autonomous vehicle, which must avoid the scene of a car accident: a shared constraint that holds regardless of the task it is trying to complete.

While constraints are important, it can be impractical for a user to exhaustively program into a robot all the possible constraints it should obey when performing its repertoire of tasks. To avoid this, we consider in this paper the problem of recovering the latent constraints within expert demonstrations that are shared across tasks in the environment. Our method is based on the key insight that each safe, optimal demonstration induces a set of lower-cost trajectories that must be unsafe due to violation of an unknown constraint. Our method samples these unsafe trajectories, ensuring they are also consistent with the known constraints (system dynamics, control constraints, and start/goal constraints), and uses these unsafe trajectories together with the safe demonstrations as constraints in an ``inverse" integer program which recovers a consistent unsafe set. Our contributions are fourfold:
\vspace{-4pt}
\begin{itemize}
	\item We pose the novel problem of learning a shared constraint across tasks.
	\item We propose an algorithm that, given known constraints and boundedly suboptimal demonstrations of state-control sequences, extracts unknown constraints defined in a wide range of constraint spaces (not limited to the trajectory or state spaces) shared across demonstrations of different tasks.
	\item We provide theoretical analysis on the limits of what subsets of a constraint can be learned, depending on the demonstrations, the system dynamics, and the trajectory discretization. We also show that our method can recover a guaranteed underapproximation of the constraint.
	\item We provide experiments that justify our theory and show that our algorithm can recover an unsafe set with few demonstrations, across different types of linear and nonlinear dynamics, and can be adapted to work with boundedly suboptimal demonstrations. We also demonstrate that our method can learn constraints in the state space and a feature space. 
\end{itemize}
\vspace{-5pt}
\vspace{-15pt}
\section{Related Work}
\vspace{-5pt}
\textbf{Inverse optimal control} \cite{kalman, boyd} (IOC) and \textbf{inverse reinforcement learning} (IRL) \cite{ng_irl} aim to recover an objective function consistent with the received expert demonstrations, in the sense that the demonstrations (approximately) optimize the cost function. Our method is complementary to these approaches; if the demonstration is solving a constrained optimization problem, we are finding its constraints, given the objective function; IOC/IRL finds the objective function, given its constraints. For example, \cite{toussaint} attempts to learn the cost function of a constrained optimization problem from optimal demonstrations by minimizing the residuals of the KKT conditions, but the constraints themselves are assumed known. Another approach \cite{satinder} can represent a state-space constraint shared across tasks as a penalty term in the reward function of an MDP. However, when viewing a constraint as a penalty, it becomes unclear if a demonstrated motion was performed to avoid a penalty or to improve the cost of the trajectory in terms of the true cost function (or both). Thus, learning a constraint which generalizes between tasks with different cost functions becomes difficult. To avoid this issue, we assume a known cost function to explicitly reason about the constraint.

One branch of \textbf{safe reinforcement learning} aims to perform exploration while minimizing visitation of unsafe states. Several methods for safe exploration in the state space \cite{safe_exploration, krause, claire} use a Gaussian process (GP) to explore safe regions in the state space. These approaches differ from ours in that they use exploration instead of demonstrations. Some drawbacks to these methods include that unsafe states can still be visited, Lipschitz continuity of the safety function is assumed, or the dynamics are unknown but the safe set is known. Furthermore, states themselves are required to be explicitly labeled as safe or unsafe, while we only require the labeling of whole trajectories. Our method is capable of learning a binary constraint defined in other spaces, using only state-control trajectories.
 
There exists prior work in learning \textbf{geometric constraints} in the workspace. In \cite{vijayakumar}, a method is proposed for learning Pfaffian constraints, recovering a linear constraint parametrization. In \cite{shah}, a method is proposed to learn geometric constraints which can be described by the classes of considered constraint templates. Our method generalizes these methods by being able to learn a nonlinear constraint defined in any constraint space (not limited to the state space).

Learning \textbf{local trajectory-based constraints} has also been explored in the literature. The method in \cite{dmitry} samples feasible poses around waypoints of a single demonstration; areas where few feasible poses can be sampled are assumed to be constrained. Similarly, \cite{anca} performs online constraint inference in the feature space from a single trajectory, and then learns a mapping to the task space. The methods in \cite{lfdc1,lfdc2,lfdc3,lfdc4} also learn constraints in a single task. These methods are inherently local since only one trajectory or task is provided, unlike our method, which aims to learn a global constraint shared across tasks.

Our work is also relevant to \textbf{human-robot interaction}. In \cite{knepper}, implicit communication of facts between agents is modeled as an interplay between demonstration and inference, where ``surprising" demonstrations trigger inference of the implied fact. Our method can be seen as an inference algorithm which infers an unknown constraint implied by a ``surprising" demonstration.
\vspace{-12pt}
\section{Preliminaries and Problem Statement}
\vspace{-5pt}
The goal of this work is to recover unknown constraints shared across a collection of optimization problems, given boundedly suboptimal solutions, the cost functions, and knowledge of the dynamics, control constraints, and start/goal constraints. We discuss the forward problem, which generates the demonstrations, and the inverse problem: the core of this work, which recovers the constraints.
\vspace{-12pt}
\subsection{Forward optimal control problem}
\vspace{-8pt}
Consider an agent described by a state in some state space $\state \in \statespace$. It can take control actions $\control \in \controlset$ to change its state. The agent performs tasks $\task$ drawn from a set of tasks $\taskspace$, where each task $\task$ can be written as a constrained optimization problem over state trajectories in state trajectory space $\trajx \in \trajset^\state$ and control trajectories $\traju \in \trajset^\control$ in control trajectory space:
\begin{problem}[Forward problem / ``task" $\task$]\label{prob:fwd_prob}
\vspace{-5pt}
\begin{equation}\label{eq:fwdprob}
	\begin{array}{>{\displaystyle}c >{\displaystyle}l >{\displaystyle}l}
		\underset{\trajx, \traju}{\text{minimize}} & \quad c_\task(\trajx, \traju) &\\
		\text{subject to} & \quad \phi(\trajx, \traju) \in \safeset \subseteq \constraintspace\\
		& \quad \bar\phi(\trajx, \traju) \in \bar\safeset \subseteq \bar\constraintspace\\
		& \quad \phi_\task(\trajx, \traju) \in \safeset_\task \subseteq \constraintspace_\task\\
	\end{array}
\end{equation}
\end{problem}

\noindent where $c_\task(\cdot): \trajset^\state \times \trajset^\control \rightarrow \mathbb{R}$ is a cost function for task $\task$ and $\phi(\cdot, \cdot) : \trajset^\state \times \trajset^\control \rightarrow \constraintspace$ is a known feature function mapping state-control trajectories to some constraint space $\constraintspace$. $\bar\phi(\cdot,\cdot): \trajset^\state \times \trajset^\control \rightarrow \bar\constraintspace$ and $\phi_\task(\cdot, \cdot): \trajset^\state \times \trajset^\control \rightarrow \constraintspace_\task$ are known and map to potentially different constraint spaces $\bar\constraintspace$ and  $\constraintspace_\task$, containing a known shared safe set $\bar \safeset$ and a known task-dependent safe set $\safeset_\task$, respectively. $\safeset$ is an unknown safe set, and the inverse problem aims to recover its complement, $\unsafeset \doteq \safeset^c$, the ``unsafe" set. In this paper, we focus on constraints separable in time: $\phi(\xi_x, \xi_u) \in \mathcal{A} \Leftrightarrow \exists t\in\{1,\ldots, T\}\; \phi(\xi_x(t), \xi_u(t)) \in \mathcal{A}$, where we overload $\phi$ so it applies to the instantaneous values of the state and the input. An analogous definition holds for the continuous time case. Our method easily learns non-separable trajectory constraints as well\footnote{Write Problem \ref{prob:feasibility_program} constraints as sums over partially separable/inseparable feature components instead of completely separable components.}.

A demonstration, $\trajxu \doteq (\trajx, \traju)$, is a state-control trajectory which is a boundedly suboptimal solution to Problem \ref{eq:fwdprob}, i.e. the demonstration satisfies all constraints and its cost is at most a factor of $\delta$ above the cost of the optimal solution $\trajxu^*$, i.e. $c(\trajx^*, \traju^*) \le c(\trajx, \traju) \le (1+\delta)c(\trajx^*, \traju^*)$. Furthermore, let $T$ be a finite time horizon which is allowed to vary. If $\trajxu$ is a discrete-time trajectory ($\trajx = \{\state_1, \ldots, \state_T\}$, $\traju = \{\control_1, \ldots, \control_T\}$), Problem \ref{prob:fwd_prob} is a finite-dimensional optimization problem, while Problem \ref{prob:fwd_prob} becomes a functional optimization problem if $\trajxu$ is a continuous-time trajectory ($\trajx : [0, T] \rightarrow \statespace$, $\traju : [0, T] \rightarrow \controlset$). We emphasize this setup does not restrict the unknown constraint to be defined on the trajectory space; it allows for constraints to be defined on any space described by the range of some known feature function $\phi$.

We assume the trajectories are generated by a dynamical system $\dot x = f(\state, \control, t)$ or $x_{t+1} = f(\state_t, \control_t, t)$ with control constraints $\control_t \in \controlset$, for all $t$, and that the dynamics, control constraints, and start/goal constraints are known. 
We further denote the set of state-control trajectories satisfying the unknown shared constraint, the known shared constraint, and the known task-dependent constraint as $\trajset_\safeset$, $\trajset_{\bar\safeset}$, and $\trajset_{\safeset_\task}$, respectively.  Lastly, we also denote the set of trajectories satisfying all known constraints but violating the unknown constraint as $\trajset_\unsafeset$.
\vspace{-5pt}
\subsection{Inverse constraint learning problem}
\setlength{\intextsep}{5pt}%
\setlength{\columnsep}{5pt}%
\begin{wrapfigure}{R}{0.4\textwidth}
    \centering
    \vspace{-32pt}
    \includegraphics[width=0.35\textwidth]{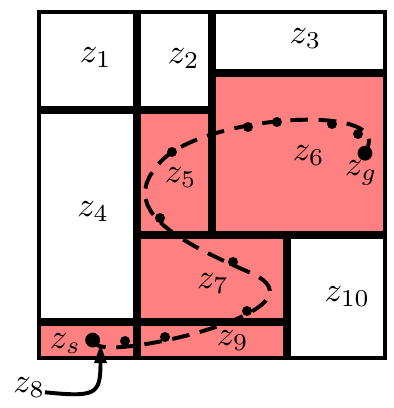}
    \vspace{-12pt}
    \caption{Discretized constraint space with cells $\grid_1, \ldots, \grid_{10}$. The trajectory's constraint values are assigned to the red cells.}
    \vspace{-3pt}
    \label{fig:discretization}
\end{wrapfigure}

The goal of the inverse constraint learning problem is to recover an unsafe set, $\unsafeset\subseteq\constraintset$, using $\numsafe$ provided safe demonstrations $\xi_{s_j}^*, j = 1, \ldots, \numsafe$, known constraints, and $\numunsafe$ inferred unsafe trajectories, $\xi_{{\neg s}_k}, k = 1, \ldots, \numunsafe$, generated by our method, which can come from multiple tasks. These trajectories can together be thought of as a set of constraints on the possible assigments of unsafe elements in $\constraintspace$. To recover a gridded approximation of the unsafe set $\unsafeset$ that is consistent with these trajectories, we first discretize $\constraintspace$ into a finite set of $G$ discrete cells $\gridspace \doteq \{\grid_1, \ldots, \grid_G\}$ and define an occupancy function, $\occ(\cdot)$, which maps each cell to its safeness: $\occ(\cdot) : \gridspace \rightarrow \{0, 1\}$, where $\occ(\grid_i) = 1$ if $\grid_i \in \unsafeset$, and $0$ otherwise. Continuous space trajectories are gridded by concatenating the set of grid cells $\grid_i$ that $\phi(\state_1), \ldots, \phi(\state_T)$ lie in, which is graphically shown in Figure \ref{fig:discretization} with a non-uniform grid. Then, the problem can be written down as an integer feasibility problem:

\begin{problem}[Inverse feasibility problem]\label{prob:feasibility_program}
\vspace{-6pt}\begin{equation}\label{eq:feasibility_program}
	\begin{array}{>{\displaystyle}c >{\displaystyle}l >{\displaystyle}l}
		\text{find} & \occ(\grid_1), \ldots, \occ(\grid_G) \in \{0, 1\}^G &\\
		\text{subject to} & \quad \sum_{\substack{\grid_i \in \{\phi(\traj^*_{s_j}(1)),\ldots,\\ \phi(\traj^*_{s_j}(T_j)) \} }} \occ(\grid_i) = 0, & \quad \forall j = 1, \ldots, \numsafe \\
		 & \quad \sum_{\substack{\grid_i \in \{\phi(\xi_{{\neg s}_k}(1)), \ldots,\\ \phi(\xi_{{\neg s}_k}(T_k))\}} } \occ(\grid_i) \ge 1, & \quad \forall k = 1, \ldots, \numunsafe
	\end{array}
\end{equation}
\end{problem}

Inferring unsafe trajectories, i.e. obtaining $\xi_{{\neg s}_k}, k=1, \ldots, \numunsafe$, is the most difficult part of this problem, since finding lower-cost trajectories consistent with known constraints that complete a task is essentially a planning problem. Much of the next section shows how to efficiently obtain $\xi_{{\neg s}_k}$. Further details on Problem \ref{prob:feasibility_program}, including conservativeness guarantees, incorporating a prior on the constraint, and a continuous relaxation can be found in Section \ref{sec:integer_program}.
\vspace{-5pt}
%
%

\vspace{-2pt}
\section{Method}\label{sec:method}
\vspace{-2pt}
The key to our method lies in finding lower-cost trajectories that do not violate the known constraints, given a demonstration with boundedly-suboptimal cost satisfying all constraints. Such trajectories must then violate the unknown constraint. Our goal is to determine an unsafe set in the constraint space from these trajectories using Problem \ref{prob:feasibility_program}. In the following, Section \ref{sec:known_constraints} describes lower-cost trajectories consistent with the known constraints; Section \ref{sec:sampling} describes how to sample such trajectories; Section \ref{sec:version_space} describes how to get more information from unsafe trajectories; Section \ref{sec:integer_program} describes details and extensions to Problem \ref{eq:feasibility_program}; Section \ref{sec:suboptimality} discusses how to extend our method to suboptimal demonstrations. The complete flow of our method is described in Algorithm \ref{alg:total}.

\vspace{-8pt}
\subsection{Trajectories satisfying known constraints}\label{sec:known_constraints}

Consider the forward problem (Problem \ref{eq:fwdprob}). We define the set of unsafe state-control trajectories induced by an optimal, safe demonstration $\traj_{xu}^*$, $\unsafetrajset^{\trajxu^*}$, as the set of state-control trajectories of lower cost that obey the known constraints:
\begin{equation}
	\unsafetrajset^{\trajxu^*} \doteq \{ \trajxu \ |\ c(\trajx, \traju) < c(\trajx^*, \traju^*), \trajxu \in \trajset_{\bar\safeset}, \trajxu \in \trajset_{\safeset_\task} \}.
\end{equation}

In this paper, we deal with the known constraints from the system dynamics, the control limits, and task-dependent start and goal state constraints. Hence, $\trajset_{\bar\safeset} = \dynfeasibleset \cap \controlfeasibleset$, where $\dynfeasibleset$ denotes the set of dynamically feasible trajectories and $\controlfeasibleset$ denotes the set of trajectories using controls in $\controlset$ at each time-step. $\trajset_{\safeset_\task}$ denotes trajectories satisfying start and goal constraints. We develop the method for discrete time trajectories, but analogous definitions hold in continuous time. For discrete time, length $T$ trajectories, $\controlfeasibleset$, $\dynfeasibleset$, and $\trajset_{\safeset_\task}$ are explicitly:
\begin{equation}
\begin{aligned}
	\controlfeasibleset &\doteq \{ \xi_{xu} \ |\ \control_t \in \controlset, \ \forall t \in \{ 1, \ldots, T-1 \}\ \}, \\
	\dynfeasibleset &\doteq \{ \xi_{xu} \ |\ \state_{t+1} = f(\state_t, \control_t),\ \forall t \in \{ 1, \ldots, T-1 \}\ \}, \\
	\trajset_{\safeset_\task} &\doteq \{ \xi_{xu} \ |\ \state_{1} = \state_s, \state_T = \state_g \}.
\end{aligned}
\end{equation}
\vspace{-10pt}
\subsection{Sampling trajectories satisfying known constraints}\label{sec:sampling}
\vspace{-2pt}
\begin{table}
\begin{center}
\smaller
\begin{tabular}{ | c | c | c | c | }
\hline
\textbf{Dynamics} & \textbf{Cost function} & \textbf{Control constraints} & \textbf{Sampling method} \\ \hline
Linear & Quadratic & Convex & Ellipsoid hit-and-run (Section \ref{sec:elllipsoidsampling}) \\ \hline
Linear & Convex & Convex & Convex hit-and-run (Section \ref{sec:convexcost}) \\ \hline
\multicolumn{3}{| c |}{Else} & Non-convex hit-and-run (Section \ref{sec:nonconvexintersection}) \\ 
\hline
\end{tabular}
\end{center}
\vspace{-5pt}
\caption{Sampling methods for different dynamics/costs/feasible controls.}
\label{tbl:sampling}
\vspace{-25pt}
\end{table}
\vspace{0pt}
We sample from $\trajset_{\unsafeset}^{\trajxu^*}$ to obtain lower-cost trajectories obeying the known constraints using hit-and-run sampling \cite{hit_and_run} over the set $\trajset_{\unsafeset}^{\trajxu^*}$, a method guaranteeing convergence to a uniform distribution of samples over $\trajset_{\unsafeset}^{\trajxu^*}$ in the limit; the method is detailed in Algorithm \ref{alg:hnr} and an illustration is shown in Figure \ref{fig:hnr}. Hit-and-run starts from an initial point within the set, chooses a direction uniformly at random, moves a random amount in that direction such that the new point remains within the set, and repeats. 

Depending on the convexity of the cost function and the control constraints and on the form of the dynamics, different sampling techniques can be used, organized in Table \ref{tbl:sampling}. The following sections describe each sampling method.

\vspace{5pt}
\noindent\begin{minipage}{.59\textwidth}
\begin{algorithm}[H]\label{alg:hnr}
\SetAlgoLined
\SetKwInOut{Input}{Input}
\SetKwInOut{Output}{Output}
\Output{$\textsf{out} \doteq \{\traj_1, \ldots, \traj_{\numunsafe}\}$}
\Input{$\unsafetrajset^{\trajxu^*}, \trajxu^*, \numunsafe$}
 $\trajxu \leftarrow \trajxu^*$;  $\textsf{out} \leftarrow \{ \}$\;
 \For{i = 1:$\numunsafe$}{
  $r \leftarrow \textsf{sampleRandDirection}$\;
  $\mathcal{L} \leftarrow \unsafetrajset^{\trajxu^*} \cap \{\trajxu' \in \trajset \ |\ \trajxu' = \trajxu + \beta r\}$\;
  $L_-, L_+ \leftarrow \textsf{endpoints}(\mathcal{L})$\;
  $\trajxu \sim \textsf{Uniform}(L_-, L_+)$\;
  $\textsf{out} \leftarrow \textsf{out} \cup \trajxu$\;
 }
 \caption{Hit-and-run}
\end{algorithm}
\end{minipage}
\hspace{-8pt}\begin{minipage}{.43\textwidth}
\centering
\includegraphics[width=1\linewidth]{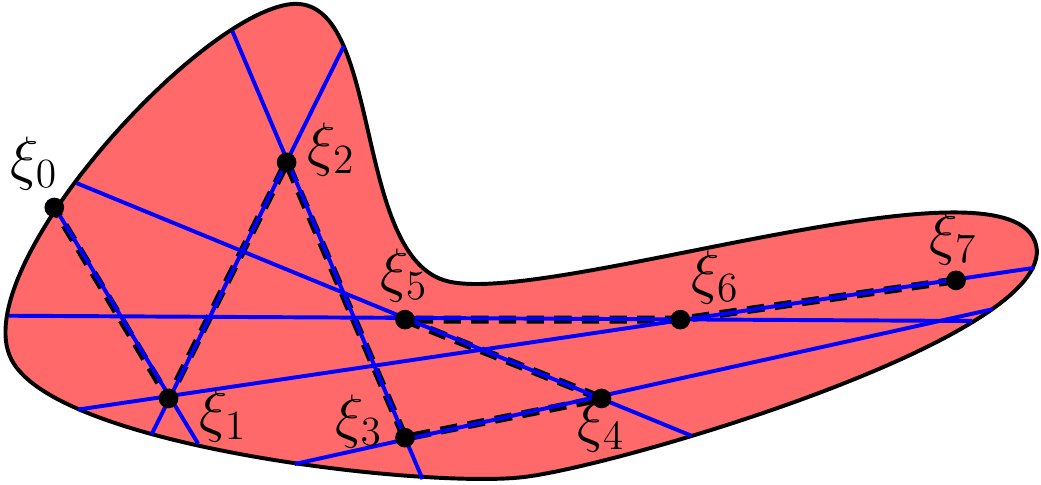}
\captionof{figure}{Illustration of hit-and-run. Blue lines denote sampled random directions, black dots denote samples.} \label{fig:hnr}
\end{minipage}%
\vspace{-10pt}
\subsubsection{Ellipsoid hit-and-run}\label{sec:elllipsoidsampling}
When we have a linear system with quadratic cost and convex control constraints - a very common setup in the optimal control literature - $\trajset_\unsafeset^{\trajxu^*} \doteq \{ \trajxu \ |\ c(\trajxu) < c(\trajxu^*)\} \cap \dynfeasibleset \equiv \{ \trajxu \ |\ \trajxu^\top V \trajxu < \trajxu^{*^\top} V \trajxu^*\} \cap \dynfeasibleset$ is an ellipsoid in the trajectory space, which can be efficiently sampled via a specially-tailored hit-and-run method. Here, the quadratic cost is written as $c(\trajxu)\doteq \trajxu^\top V \trajxu$, where $V$ is a matrix of cost parameters, and we omit the control and task constraints for now. Without dynamics, the endpoints of the line $\mathcal{L}$, $L_-, L_+$, (c.f. Alg. \ref{alg:hnr}), can be found by solving a quadratic equation $(\trajxu+\beta r)^\top V(\trajxu+\beta r) = \trajxu^{*^\top} V \trajxu^*$. We show that this can still be done with linear dynamics by writing $\trajset_\unsafeset^{\trajxu^*}$ in a special way. $\dynfeasibleset$ can be written as an eigenspace of a singular ``dynamics consistency" matrix, $\dynproject_1$, which converts any arbitrary state-control trajectory to one that satisfies the dynamics, one time-step at a time. Precisely, if the dynamics can be written as $\state_{t+1} = A\state_t + B\control_t$, we can write a matrix $\dynproject_1$:
\begin{equation}\label{eq:d1}
\vspace{-8pt}
\small
	\underbrace{\begin{bmatrix}
		\state_1 \\ \control_1 \\ \state_2 \\ \control_2 \\ \tilde{\state}_3 \\ \vdots \\ \control_{T-1} \\ \tilde{\state}_T
	\end{bmatrix}}_{\hat\xi_{xu}} = \underbrace{\begin{bmatrix}
		I & 0 & 0 & 0 & 0 & \cdots & \cdots & 0\\
		0 & I & 0 & 0 & 0 & \cdots & \cdots & 0\\
		A & B & 0 & 0 & 0 & \cdots & \cdots & 0\\
		0 & 0 & 0 & I & 0 & \cdots & \cdots & 0\\
		0 & 0 & A & B & 0 & \cdots & \cdots & 0\\
		\vdots & \vdots & \vdots & \vdots & \vdots & \ddots & \ddots & \vdots \\
		0 & 0 & 0 & \cdots & \cdots & 0 & I & 0 \\
		0 & 0 & 0 &\cdots & \cdots & A & B & 0 \\
	\end{bmatrix}}_{\dynproject_1} 
	\underbrace{\begin{bmatrix}
		\state_1 \\ \control_1 \\ \tilde{\state}_2 \\ \control_2 \\ \tilde{\state}_3 \\ \vdots \\ \control_{T-1} \\ \tilde{\state}_T
	\end{bmatrix}}_{\xi_{xu}}
\end{equation}
that fixes the controls and the initial state and performs a one-step rollout, replacing the second state with the dynamically correct state. In Eq. \ref{eq:d1}, we denote by $\tilde{x}_{t+1}$ a state that cannot be reached by applying control $\control_t$ to state $\state_t$. Multiplying the one-step corrected trajectory $\hat\xi_{xu}$ by $\dynproject_1$ again changes $\tilde{\state}_3$ to the dynamically reachable state $\state_3$. Applying $\dynproject_1$ to the original $T$-time-step infeasible trajectory $T-1$ times results in a dynamically feasible trajectory, $\xi_{xu}^\text{feas} = \dynproject_1^{T-1} \xi_{xu}$. Further, note that the set of dynamically feasible trajectories is $\dynfeasibleset \doteq \{ \xi_{xu} \ |\ \dynproject_1 \xi_{xu} = \xi_{xu} \}$, which is the span of the eigenvectors of $\dynproject_1$ associated with eigenvalue $1$. Thus, obtaining a feasible trajectory via repeated multiplication is akin to finding the eigenspace via power iteration \cite{power_iteration}. One can also interpret this as propagating through the dynamics with a fixed control sequence. Now, we can write $\trajset_\unsafeset^{\trajxu^*}$ as another ellipsoid which can be efficiently sampled by finding $L_-, L_+$ by solving a quadratic equation:
\vspace{-6pt}
\begin{equation}\label{eq:lin_intersection}
\trajset_\unsafeset^{\trajxu^*} \doteq \{\xi_{xu} \ |\ \xi_{xu}^\top \dynproject_1^{{T-1}^\top} V \dynproject_1^{T-1} \xi_{xu} \le \xi_{xu}^{*^\top} V \xi_{xu}^* \}.
\end{equation}
\vspace{-15pt}

We deal with control constraints separately, as the intersection of $\controlfeasibleset$ and Eq. \ref{eq:lin_intersection} is in general not an ellipsoid. To ensure control constraint satisfaction, we reject samples with controls outside of $\controlfeasibleset$; this works if $\controlfeasibleset$ is not measure zero. For task constraints, we ensure all sampled rollouts obey the goal constraints by adding a large penalty term to the cost function: $\tilde{c}(\cdot) \doteq c(\cdot) + \alpha_c\Vert \state_g - \state_T\Vert_2^2$, where $\alpha_c$ is a large scalar, which can be incorporated into Eq. \ref{eq:lin_intersection} by modifying $V$ and including $\state_g$ in $\trajxu$; all trajectories sampled in this modified set satisfy the goal constraints to an arbitrarily small tolerance $\varepsilon$, depending on the value of $\alpha_c$. The start constraint is satisfied trivially: all rollouts start at $\state_s$. Note the demonstration cost remains the same, since the demonstration satisfies the start and goal constraints; this modification is made purely to ensure these constraints hold for sampled trajectories. 
\vspace{-13pt}
\subsubsection{Convex hit-and-run}\label{sec:convexcost}

For general convex cost functions, the same sampling method holds, but $L_+, L_-$ cannot be found by solving a quadratic function. Instead, we solve $c(\xi_{xu} + \beta r) = c(\trajxu^*)$ via a root finding algorithm or line search.
\vspace{-25pt}
\subsubsection{Non-convex hit-and-run}\label{sec:nonconvexintersection}

If $\trajset_\unsafeset^{\trajxu^*}$ is non-convex, $\mathcal{L}$ can now in general be a union of disjoint line segments. In this scenario, we perform a ``backtracking" line search by setting $\beta$ to lie in some initial range: $\beta \in [\underbar{$\beta$}, \overline{\beta}]$; sampling $\beta_s$ within this range and then evaluating the cost function to see whether or not $\xi_{xu} + \beta_s r$ lies within the intersection. If it does, the sample is kept and hit-and-run proceeds normally; if not, then the range of possible $\beta$ values is restricted to $[\beta_s, \overline\beta]$ if $\beta_s$ is negative, and $[\underbar{$\beta$}, \beta_s]$ otherwise. Then, new $\beta$s are re-sampled until either the interval length shrinks below a threshold or a feasible sample is found. This altered hit-and-run technique still converges to a uniform distribution on the set in the limit, but has a slower mixing time than for the convex case, where mixing time describes the number of samples needed until the total variation distance to the steady state distribution is less than a small threshold \cite{nonconvexhnr}. Further, we accelerate sampling spread by relaxing the goal constraint to a larger tolerance $\hat\varepsilon > \varepsilon$ but keeping only the trajectories reaching within $\varepsilon$ of the goal.

%
\vspace{-12pt}
\subsection{Improving learnability using cost function structure}\label{sec:version_space}
\begin{wrapfigure}{L}{0.51\textwidth}
\begin{minipage}{0.52\textwidth}
\vspace{-9pt}
\begin{algorithm}[H]\label{alg:total}
\SetAlgoLined
\SetKwInOut{Input}{Input}
\SetKwInOut{Output}{Output}
\Output{$\occ \doteq \occ(\grid_1), \ldots, \occ(\grid_G)$}
\Input{$\traj_s = \{\traj_1^*, \ldots, \traj_{\numsafe}^*\}$, $c_\task(\cdot)$, $\textrm{known constraints}$\}}
 $\xi_{\textrm{u}} \leftarrow \{ \}$\;
 \For{i = 1:$\numsafe$}{
 \tcc{Sample unsafe $\traj$}
   \uIf{$\textrm{lin., quad., conv.}$}{
   $\xi_{\textrm{u}} \leftarrow \xi_{\textrm{u}} \cap \textsf{ellipsoidHNR}(\traj_i^*)$\;
   }\uElseIf{\textrm{lin., conv., conv.}}{
   $\xi_{\textrm{u}} \leftarrow \xi_{\textrm{u}}\cap\textsf{convexHNR}(\traj_i^*)$\;
   }\uElse{
   $\xi_{\textrm{u}} \leftarrow \xi_{\textrm{u}} \cap\textsf{nonconvexHNR}(\traj_i^*)$\;
   }
 }
  \tcc{Constraint recovery}
  \uIf{\textrm{prior, continuous}}{
  $\occ \leftarrow$ Problem \ref{prob:continuous_relaxation}$(\traj_s, \traj_u)$
  }\uElseIf{\textrm{prior, binary}}{
  $\occ \leftarrow$ Problem \ref{prob:integer_program}$(\traj_s, \traj_u)$
  }\Else{
  $\occ \leftarrow$ Problem \ref{prob:feasibility_program}$(\traj_s, \traj_u)$
  }
 \caption{Overall method}
\end{algorithm}
\end{minipage}
\vspace{-15pt}
\end{wrapfigure}
\vspace{-2pt}
Na\"ively, the sampled unsafe trajectories may provide little information. Consider an unsafe, length-$T$ discrete-time trajectory $\xi$, with start and end states in the safe set. This only says there exists at least one intermediate unsafe state in the trajectory, but says nothing directly about which state was unsafe. The weakness of this information can be made concrete using the notion of a version space. In machine learning, the version space is the set of consistent hypotheses given a set of examples \cite{aima}. In our setting, hypotheses are possible unsafe sets, and examples are the safe and unsafe trajectories. Knowing $\xi$ is unsafe only disallows unsafe sets that mark every element of the constraint space that $\xi$ traverses as safe: $(\occ(\grid_2) = 0) \wedge \ldots \wedge (\occ(\grid_{T-1}) = 0)$. If $\constraintspace$ is gridded into $G$ cells, this information invalidates at most $2^{G-T+2}$ out of $2^G$ possible unsafe sets. We could do exponentially better if we reduced the number of cells that $\xi$ implies could be unsafe.

We can achieve this by sampling sub-segments (or sub-trajectories) of the larger demonstrations, holding other portions of the demonstration fixed. For example, say we fix all but one of the points on $\xi$ when sampling unsafe lower-cost trajectories. Since only one state can be different from the known safe demonstration, the unsafeness of the trajectory can be uniquely localized to whatever new point was sampled: then, this trajectory will reduce the version space by at most a factor of $2$, invalidating at most $2^G - 2^{G-1} = 2^{G-1}$ unsafe sets.
One can sample these sub-trajectories in the full-length trajectory space by fixing appropriate waypoints during sampling: this ensures the full trajectory has lower cost and only perturbs desired waypoints. However, to speed up sampling, sub-trajectories can be sampled directly in the lower dimensional sub-trajectory space if the cost function $c(\cdot)$ that is being optimized is strictly monotone \cite{monotonicity}: for any costs $c_1, c_2 \in \mathbb{R}$, control $\control \in \controlset$, and state $\state \in \statespace$, $c_1 < c_2 \Rightarrow h(c_1, \state, \control) < h(c_2, \state, \control)$, for all $\state, \control$, where $h(c, \state, \control)$ represents the cost of starting with initial cost $c$ at state $\state$ and taking control $\control$. Strictly monotone cost functions include separable cost functions with additive or multiplicative stage costs, which are common in motion planning and optimal control. If the cost function is strictly monotone, we can sample lower-cost trajectories from sub-segments of the optimal path; otherwise it is possible that even if a new sub-segment with lower cost than the original sub-segment were sampled, the full trajectory containing the sub-segment could have a higher cost than the demonstration.

\vspace{-12pt}
\subsection{Integer program formulation}\label{sec:integer_program}
\vspace{-2pt}
After sampling, we can solve Problem \ref{prob:feasibility_program} to find an unsafe set consistent with the safe and unsafe trajectories. We now discuss the details of this process.
\textbf{Conservative estimate: } One can obtain a conservative estimate of the unsafe set $\unsafeset$ from Problem \ref{prob:feasibility_program} by intersecting all possible solutions: if the unsafeness of a cell is shared across all feasible solutions, that cell must be occupied. In practice, it may be difficult to directly find all solutions to the feasibility problem, as in the worst case, finding the set of all feasible solutions is equivalent to exhaustive search in the full gridded space \cite{mixed_integer_worst_case}. A more efficient method is to loop over all $G$ grid cells and set each one to be safe, and see if the optimizer can still find a feasible solution. Cells where there exists no feasible solution are guaranteed unsafe. This amounts to solving $G$ binary integer feasibility problems, which can be trivially parallelized. Furthermore, any cells that are known safe (from demonstrations) do not need to be checked. We use this method to compute the ``\emph{learned guaranteed unsafe set}", $\unsafesurelearned$, in Section \ref{sec:results}.

\noindent\textbf{A prior on the constraint: }As will be further discussed in Section \ref{sec:learnability}, it may be fundamentally impossible to recover a unique unsafe set. If we have some prior on the nature of the unsafe set, such as it being simply connected, or that certain regions of the constraint space are unlikely to be unsafe, we can make the constraint learning problem more well-posed. Assume that this prior knowledge can be encoded in some ``energy" function $E(\cdot, \ldots, \cdot) : \{0, 1 \}^G \rightarrow \reals$ mapping the set of binary occupancies to a scalar value, which indicates the desirability of a particular unsafe set configuration. Using $E$ as the objective function in Problem \ref{prob:feasibility_program} results in a binary integer program, which finds an unsafe set consistent with the safe and unsafe trajectories, and minimizes the energy:
\vspace{-2pt}
\begin{problem}[Inverse binary minimization constraint recovery]\label{prob:integer_program}
\vspace{-7pt}
\begin{equation}\label{eq:integer_program}
	\begin{array}{>{\displaystyle}c >{\displaystyle}l >{\displaystyle}l}
		\underset{\occ(\grid_1), \ldots, \occ(\grid_G) \in \{0, 1\}^G}{\text{minimize}} & \quad E(\occ(\grid_1), \ldots, \occ(\grid_G)) &\\
		\text{subject to} & \quad \sum_{\substack{\grid_i \in \{\phi(\traj^*_{s_j}(1)),\ldots,\\ \phi(\traj^*_{s_j}(T_j)) \} }} \occ(\grid_i) = 0, & \quad \forall j = 1, \ldots, \numsafe \\
		 & \quad \sum_{\substack{\grid_i \in \{\phi(\xi_{{\neg s}_k}(1)), \ldots,\\ \phi(\xi_{{\neg s}_k}(T_k))\}} } \occ(\grid_i) \ge 1, & \quad \forall k = 1, \ldots, \numunsafe
	\end{array}
\end{equation}
\end{problem}
\vspace{-5pt}

\noindent\textbf{Probabilistic setting and continuous relaxation: }A similar problem can be posed for a probabilistic setting, where grid cell occupancies represent beliefs over unsafeness: instead of the occupancy of a cell being an indicator variable, it is instead a random variable $\gridrv_i$, where $\gridrv_i$ takes value $1$ with probability $\occp(\gridrv_i)$ and value $0$ with probability $1-\occp(\gridrv_i)$. Here, the occupancy probability function maps cells to occupancy probabilities $\occp(\cdot) : \gridspace \rightarrow [0, 1]$.

Trajectories can now be unsafe with some probability. We obtain analogous constraints from the integer program in Section \ref{sec:integer_program} in the probabilistic setting. Known safe trajectories traverse cells that are unsafe with probability 0; we enforce this with the constraint $\sum_{\gridrv_i \in \phi(\xi_{s_j}^*)} \occp(\gridrv_i) = 0$: if the unsafeness probabilities are all zero along a trajectory, then the trajectory must be safe. Trajectories that are unsafe with probability $p_k$ satisfy $\sum_{\gridrv_i \in \phi(\xi_{{\neg s}_k})} \occp(\gridrv_i) = \expectation [ \sum_{\gridrv_i \in \phi(\xi_{{\neg s}_k})} \gridrv_i ] = (1-p_k)\cdot 0 + p_k \cdot S_k \ge p_k$ where we denote the number of unsafe grid cells $\phi(\xi_{{\neg s}_k})$ traverses when the trajectory is unsafe as $S_k$, where $S_k \ge 1$. The following problem directly optimizes over occupancy probabilities:
\vspace{-1pt}
\begin{problem}[Inverse continuous minimization constraint recovery]\label{prob:continuous_relaxation}
\vspace{-5pt}
\begin{equation}\label{eq:stochastic_program}
	\begin{array}{>{\displaystyle}c >{\displaystyle}l >{\displaystyle}l}
		\underset{\occ(\gridrv_1), \ldots, \occ(\gridrv_G) \in [0, 1]^G}{\text{minimize}} & \quad E(\occ(\gridrv_1), \ldots, \occ(\gridrv_G)) &\\
		\text{subject to} & \quad \sum_{\substack{\gridrv_i \in \{\phi(\traj^*_{s_j}(1)),\ldots,\\ \phi(\traj^*_{s_j}(T_j)) \} }} \occp(\gridrv_i) = 0, & \quad \forall j = 1, \ldots, \numsafe \\
		 & \quad \sum_{\substack{\gridrv_i \in \{\phi(\xi_{{\neg s}_k}(1)), \ldots,\\ \phi(\xi_{{\neg s}_k}(T_k))\}} } \occp(\gridrv_i) \ge p_k, & \quad \forall k = 1, \ldots, \numunsafe \\
	\end{array}
\end{equation}
\end{problem}
\vspace{-5pt}
When $p_k = 1$, for all $k$ (i.e. all unsafe trajectories are unsafe for sure), this probabilistic formulation coincides with the continuous relaxation of Problem \ref{prob:integer_program}. This justifies interpreting the solution of the continuous relaxation as occupancy probabilities for each cell.
Note that Problem \ref{prob:integer_program} and \ref{prob:continuous_relaxation} have no conservativeness guarantees and use prior assumptions to make the problem more well-posed. However, we observe that they improve constraint recovery in our experiments.

\vspace{-12pt}
\subsection{Bounded suboptimality of demonstrations}\label{sec:suboptimality}
\vspace{-5pt}
If we are given a $\delta$-suboptimal demonstration $\hat\xi$, where $c(\traj^*) \le c(\hat\xi) \le (1+\delta)c(\xi^*)$, where $\xi^*$ is an optimal demonstration, we can still apply the sampling techniques discussed in earlier sections, but we must ensure that sampled unsafe trajectories are truly unsafe: a sampled trajectory $\traj'$ of cost $c(\traj')\ge c(\traj^*)$ can be potentially safe. Two options follow: one is to only keep trajectories with cost less than $\frac{c(\hat\xi)}{1+\delta}$, but this can cause little to be learned if $\delta$ is large. Instead, if we assume a distribution on suboptimality, i.e. given a trajectory of cost $c(\hat\xi)$, we know that a trajectory of cost $c(\traj') \in [\frac{c(\hat{\traj})}{1+\delta}, c(\hat \traj)]$ is unsafe with probability $p_k$, we can then use these values of $p_k$ to solve Problem \ref{prob:continuous_relaxation}. We implement this in the experiments.

\vspace{-12pt}
\section{Analysis}
\vspace{-5pt}
Due to space, the proofs/more remarks can be found in the appendix.
\vspace{-10pt}
\subsection{Learnability}\label{sec:learnability}
\vspace{-3pt}


%

We provide analysis on the learnability of unsafe sets, given the known constraints and cost function. Most analysis assumes unsafe sets defined over the state space: $\unsafeset \subseteq \statespace$, but we extend it to the feature space in Corollary \ref{thm:feature}. We provide some definitions and state a result bounding $\unsafesure$, the set of all states that can be learned guaranteed unsafe. We first define the signed distance:

\begin{definition}[Signed distance]
	Signed distance from point $p \in \mathbb{R}^m$ to set $\mathcal{S} \subseteq \mathbb{R}^m$, $\sd(p, \mathcal{S}) = -\inf_{y \in \partial\mathcal{S}} \Vert p - y \Vert$ if $p\in \mathcal{S}$; $\inf_{y \in \partial\mathcal{S}} \Vert p - y \Vert$ if $p\in \mathcal{S}^c$.
\end{definition}

%

\begin{theorem}[Learnability (discrete time)]\label{thm:umaxshell}
	For trajectories generated by a discrete time dynamical system satisfying $\Vert x_{t+1} - x_t \Vert \le \umax$ for all $t$, the set of learnable guaranteed unsafe states is a subset of the outermost $\umax$ shell of the unsafe set:
		$\unsafesure \subseteq \{ x \in \unsafeset \ |\ -\umax \le \sd(x, \unsafeset) \le 0 \}$ (see Section \ref{sec:app_learnability} for an illustration).
\end{theorem}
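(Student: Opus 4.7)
The plan is to prove the contrapositive: if $x \in \unsafeset$ satisfies $\sd(x, \unsafeset) < -\umax$, then $x \notin \unsafesure$, i.e.\ there exists a feasible occupancy for Problem~\ref{prob:feasibility_program} that assigns $x$ safe. The first step is a purely geometric observation: since signed distance is $1$-Lipschitz, any $y$ with $\|y - x\| \le \umax$ satisfies $\sd(y, \unsafeset) \le \sd(x, \unsafeset) + \|y - x\| < 0$, so the closed $\umax$-ball $B_{\umax}(x)$ is contained in $\unsafeset$. Intuitively, $x$ is ``surrounded'' by unsafe states in every reachable direction.

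Second, I would argue that every trajectory $\xi = (x_1,\ldots,x_T)$ appearing in Problem~\ref{prob:feasibility_program} satisfies $x_1 = \state_s$ and $x_T = \state_g$, both of which lie in $\safeset$ because safe demonstrations are safe throughout and the sampling procedures in Section~\ref{sec:sampling} enforce the start/goal constraints. Hence whenever such a trajectory visits $x$ at some index $i$, we must have $1 < i < T$, so both $x_{i-1}$ and $x_{i+1}$ exist on the trajectory. The hypothesis $\|x_{t+1} - x_t\| \le \umax$ combined with the first step places $x_{i-1}, x_{i+1} \in \unsafeset \setminus \{x\}$.

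Third, I would construct the witness occupancy $\occ'$ that marks exactly the cells corresponding to $\unsafeset \setminus \{x\}$ as unsafe. The safe-demonstration rows of Problem~\ref{prob:feasibility_program} hold vacuously, because safe demos never enter $\unsafeset$. Each sampled unsafe trajectory $\xi_{\neg s_k}$ satisfies the $\ge 1$ row as well: if $\xi_{\neg s_k}$ does not pass through $x$ it already visits some cell of $\unsafeset \setminus \{x\}$ (it is an unsafe trajectory), while if it does visit $x$, the preceding step guarantees that its two trajectory neighbors of $x$ lie in $\unsafeset \setminus \{x\}$. Thus $\occ'$ is feasible with $\occ'(x) = 0$, showing that $x$ is not forced unsafe by any consistent reasoning, and hence $\unsafesure \subseteq \{x \in \unsafeset \mid -\umax \le \sd(x, \unsafeset) \le 0\}$.

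The main obstacle I expect is the continuous/gridded interplay: the theorem is phrased over continuous states via $\sd(\cdot)$, but Problem~\ref{prob:feasibility_program} lives on grid cells $\grid_i$, so the ``delete $x$'' witness must really be realized as a single cell, not a continuous point. A careful statement should take $x$ to mean a cell whose closure lies strictly deeper than $\umax$ inside $\unsafeset$, and verify that the neighbors $x_{i\pm 1}$ on the trajectory then fall in cells different from the witness cell; this is automatic once the grid is no coarser than $\umax$. A secondary subtlety is handling the edge case where the step bound is attained with equality and where $\state_s$ or $\state_g$ touches $\partial\unsafeset$, but these cases do not affect the stated inclusion because the bound $-\umax \le \sd$ is non-strict.
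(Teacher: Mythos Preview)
Your proof is correct and follows essentially the same approach as the paper: both hinge on the observation that any trajectory visiting a state $x$ lying deeper than $\umax$ inside $\unsafeset$ must have its one-step neighbors also in $\unsafeset$, so $x$ can never be isolated as the sole unsafe state. The paper argues this directly (for $x$ to be certified unsafe, its trajectory neighbors must be known safe, forcing $x$ within $\umax$ of $\partial\unsafeset$), whereas you take the contrapositive and explicitly exhibit the witness occupancy $\unsafeset\setminus\{x\}$---a slightly more formal packaging of the same geometric idea.
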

\vspace{-10pt}

%
%
\vspace{-5pt}
\begin{corollary}[Learnability (continuous time)]\label{thm:continuous_learnability}
	For continuous trajectories $\xi(\cdot): [0, T] \rightarrow \statespace$, the set of learnable guaranteed unsafe states shrinks to the boundary of the unsafe set: $\unsafesure \subseteq \{ x \in \unsafeset \ |\ \sd(x, \unsafeset) = 0 \}$.
\end{corollary}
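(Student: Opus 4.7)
The plan is to prove the contrapositive: show that any state $x$ with $\sd(x, \unsafeset) < 0$ (i.e., strictly in the interior of $\unsafeset$) cannot belong to $\unsafesure$. For each such $x$ I will exhibit an alternative unsafe set $\unsafeset'$ that excludes $x$ but remains consistent with every safe demonstration and every sampled unsafe trajectory. Since $\unsafesure$ is the intersection of all consistent unsafe sets, exhibiting one consistent candidate that omits $x$ suffices. Note that Theorem \ref{thm:umaxshell} cannot be used as a black box here, because a continuous trajectory carries strictly more information than any of its discretizations, so the continuous learnable set could in principle be larger than the $\umax \to 0$ limit of the discrete ones; a direct argument is needed.

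Concretely, pick $\epsilon \in (0, -\sd(x, \unsafeset))$ and set $\unsafeset' \doteq \unsafeset \setminus B_\epsilon(x)$, where $B_\epsilon(x)$ is an open Euclidean ball. Consistency with the safe demonstrations is immediate: $\unsafeset' \subseteq \unsafeset$ and the safe demonstrations avoid $\unsafeset$. The key step is consistency with each inferred unsafe continuous trajectory $\xi_{\neg s_k}(\cdot):[0,T_k]\to \statespace$. By construction $\xi_{\neg s_k}$ starts at the task-shared state $\state_s$, which lies in $\unsafeset^c$ (the corresponding safe demonstration starts there and is entirely safe), and it must intersect $\unsafeset$ somewhere (otherwise it would be safe and would not have been inferred). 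Continuity of $\xi_{\neg s_k}$, together with an intermediate-value argument applied to $t \mapsto \sd(\xi_{\neg s_k}(t), \unsafeset)$ (which starts nonnegative and becomes nonpositive), yields a time $t^\star$ with $\xi_{\neg s_k}(t^\star) \in \partial \unsafeset$. Because $\epsilon < -\sd(x, \unsafeset)$ we have $B_\epsilon(x) \subset \mathrm{int}(\unsafeset)$, hence $\partial \unsafeset \cap B_\epsilon(x) = \emptyset$, so $\xi_{\neg s_k}(t^\star) \in \unsafeset'$. Thus $\unsafeset'$ still labels $\xi_{\neg s_k}$ as unsafe, and consistency is preserved. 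Since $x \in \unsafeset \setminus \unsafeset'$, we conclude $x \notin \unsafesure$.

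The main obstacle is the intermediate-value step, which requires (i) that $\xi_{\neg s_k}$ starts outside $\unsafeset$ and (ii) mild regularity of $\unsafeset$ so that a continuous curve reaching $\unsafeset$ necessarily crosses $\partial\unsafeset$ at a point that lies in $\unsafeset'$. Property (i) follows from the shared start state and safety of the demonstration; for (ii) I would assume the standard regularity that $\unsafeset$ is closed with $\partial \unsafeset \subset \unsafeset$ (which is the usual setting for obstacles). Under these assumptions the contrapositive gives $\unsafesure \subseteq \{x \in \unsafeset : \sd(x, \unsafeset) = 0\}$, matching the stated corollary. Degenerate cases (e.g., $\mathrm{int}(\unsafeset)=\emptyset$) make the containment vacuous on the interior and hence automatic.
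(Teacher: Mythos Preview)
Your proof is correct and takes a genuinely different route from the paper. The paper's argument is a one-liner: view the continuous-time system as the $\Delta t\to 0$ limit of a discrete-time one, note that locally Lipschitz dynamics force $\Delta x\to 0$, and invoke Theorem~\ref{thm:umaxshell}. Your objection to that black-box step is well taken: Theorem~\ref{thm:umaxshell} bounds the learnable set for each fixed discretization, and since a continuous trajectory carries at least as much information as any of its discretizations, the continuous $\unsafesure$ is not automatically dominated by the discrete shells; the paper's step is really only airtight if one reads ``via Theorem~\ref{thm:umaxshell}'' as ``rerun the \emph{argument} of Theorem~\ref{thm:umaxshell} with $\Delta x=0$'' rather than as a literal application of its statement. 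Your explicit construction of a competing consistent hypothesis $\unsafeset' = \unsafeset\setminus B_\epsilon(x)$ sidesteps this entirely and is self-contained. One small fix: the closedness assumption on $\unsafeset$ is unnecessary (and the paper's experiments in fact take $\unsafeset$ open). If $t^\star$ is the first boundary-crossing time, then for $t$ slightly after $t^\star$ the trajectory lies in $\unsafeset$ and, by continuity, is still at distance greater than $\epsilon$ from $x$, hence in $\unsafeset'$; this covers both open and closed $\unsafeset$. Finally, your appeal to the safe start state extends verbatim to the sub-trajectory sampling of Section~\ref{sec:version_space}, since sub-trajectory endpoints are fixed at (safe) demonstration waypoints.
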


Depending on the cost function, $\unsafesure$ can become arbitrarily small: some cost functions are not very informative for recovering a constraint. For example, the path length cost function used in many of the experiments (which was chosen due to its common use in the motion planning community), prevents any lower-cost sub-trajectories from being sampled from straight sub-trajectories. The system's controllability also impacts learnability: the more controllable the system, the more of the $\Delta x$ shell is reachable. We present a theorem quantifying when the dynamics allow unsafe trajectories to be sampled in Theorem \ref{thm:app_learnability_dynamics}.
\vspace{-5pt}
\vspace{-7pt}
\subsection{Conservativeness}
\vspace{-3pt}
We discuss conditions on $\unsafeset$ and discretization which ensure our method provides a conservative estimate of $\unsafeset$. For analysis, we assume $\unsafeset$ has a Lipschitz boundary \cite{dacorogna_calc_variations}. We begin with notation (explanatory illustrations are in Section \ref{sec:app_conservativeness}):

\begin{definition}[Set thickness]\label{def:thickness}
	Denote the outward-pointing normal vector at a point $p\in\partial\unsafeset$ as $\hat n(p)$. Furthermore, at non-differentiable points on $\partial \unsafeset$, $\hat{n}(p)$ is replaced by the set of normal vectors for the sub-gradient of the Lipschitz function describing $\partial\unsafeset$ at that point \cite{thickness}. The set $\unsafeset$ has a thickness larger than $d_\text{thick}$ if $\forall x \in \partial \unsafeset, \forall d \in [0, d_\textrm{thick}], \sd(x - d \hat n(x), \unsafeset) \le 0$.
\end{definition}

\begin{definition}[$\gamma$-offset padding]\label{def:offset}
	Define the $\gamma$-offset padding $\partial \unsafeset_{\gamma}$ as:
		$\partial \unsafeset_{\gamma} = \{ x \in \statespace \ | \ x = y + d \hat n(y), d\in [0, \gamma], y \in \partial \unsafeset \}$.
\end{definition}
\vspace{-15pt}
\begin{definition}[$\gamma$-padded set]\label{def:buffered_set}
	We define the $\gamma$-padded set of the unsafe set $\unsafeset$, $\unsafeset(\gamma)$, as the union of the $\gamma$-offset padding and $\unsafeset$: $\unsafeset(\gamma) \doteq \partial \unsafeset_\gamma \cup \unsafeset$.
\end{definition}
\vspace{-10pt}
\begin{corollary}[Conservative recovery of unsafe set]
	For a discrete-time system, a sufficient condition ensuring that the set of learned guaranteed unsafe states $\mathcal{A}_l^\textrm{rec}$ is contained in $\unsafeset$ is that $\unsafeset$ has a set thickness greater than or equal to $\umax$ (c.f. Definition \ref{thm:umaxshell}).
\end{corollary}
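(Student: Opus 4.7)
The plan is to derive the inclusion $\unsafesurelearned \subseteq \unsafeset$ by combining Theorem \ref{thm:umaxshell} with the thickness hypothesis, which together constrain where the algorithm's output can live. First, I would invoke Theorem \ref{thm:umaxshell} to observe that any state that is theoretically learnable as guaranteed unsafe lies in the outermost $\umax$-shell $\{x \in \unsafeset \mid -\umax \le \sd(x,\unsafeset) \le 0\}$, a set that is by construction contained in $\unsafeset$. This handles the theoretical side; the remaining work is to tie the algorithm's recovered set $\unsafesurelearned$, produced by the per-cell feasibility loop described in the ``Conservative estimate'' paragraph of Section \ref{sec:integer_program}, to this theoretical guarantee.

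Second, I would argue by contradiction. A grid cell $z^\star$ enters $\unsafesurelearned$ only when Problem \ref{prob:feasibility_program} becomes infeasible after forcing $\occ(z^\star)=0$, so it suffices to exhibit, for every cell $z^\star$ with $z^\star \not\subseteq \unsafeset$, one concrete feasible occupancy that assigns it $0$. The natural witness is the canonical assignment $\occ^\star(z)=1 \Leftrightarrow z \subseteq \unsafeset$, which clearly satisfies $\occ^\star(z^\star)=0$ whenever $z^\star$ contains some safe point. Feasibility then reduces to two checks: (a) no cell visited by a safe demonstration is entirely inside $\unsafeset$, which is immediate since each such cell contains a demonstrated safe state; and (b) every sampled lower-cost trajectory visits at least one cell entirely inside $\unsafeset$. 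Because the demonstration is optimal and obeys all known constraints, any lower-cost trajectory in $\trajset_\unsafeset^{\trajxu^*}$ must violate the unknown constraint, so some $x_t$ lies in $\unsafeset$; the thickness hypothesis $\ge \umax$, combined with the step bound $\|x_{t+1}-x_t\|\le \umax$, prevents the trajectory from merely grazing $\partial\unsafeset$ for a single time step (the $\umax$-wide inward normal tube around $\partial\unsafeset$ is entirely in $\unsafeset$, so the trajectory is forced to commit to at least one state away from $\partial\unsafeset$), yielding a visited cell wholly contained in $\unsafeset$ on which $\occ^\star = 1$.

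The main obstacle will be pinning down step (b) rigorously: turning the informal ``cannot graze in one step'' statement into a clean geometric lemma that produces a visited state whose containing grid cell is entirely inside $\unsafeset$. This requires a careful trajectory-by-trajectory case analysis (entering, crossing, and turning inside $\unsafeset$) and an implicit compatibility between the grid diameter and $\umax$; the cleanest way to handle this is probably to require thickness to dominate both $\umax$ and the grid diameter (as hinted by Definition \ref{def:thickness} and Definition \ref{def:buffered_set}), after which the inclusion $\unsafesurelearned \subseteq \unsafeset$ follows because $\occ^\star$ witnesses feasibility whenever $\occ(z^\star)=0$ is imposed on a cell outside $\unsafeset$.
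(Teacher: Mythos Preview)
Your first paragraph already \emph{is} the paper's proof. The paper simply invokes Theorem \ref{thm:umaxshell} to conclude that any state learned guaranteed unsafe lies in the $\umax$-shell $\{x \in \unsafeset : -\umax \le \sd(x,\unsafeset)\le 0\}$, notes this shell is by definition contained in $\unsafeset$, and stops. It treats the inclusion $\unsafesurelearned \subseteq \unsafesure$ as immediate (whatever the feasibility loop certifies as guaranteed unsafe is, in particular, learnable as guaranteed unsafe), so no separate algorithmic argument is supplied; the thickness hypothesis is invoked only in passing and is not doing heavy geometric work.

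Your second and third paragraphs take a genuinely different, more operational route via the feasibility witness $\occ^\star(z)=\mathbf{1}[z\subseteq\unsafeset]$, but step (b) has a real gap. The thickness hypothesis says that inward normal rays of length up to $\umax$ from $\partial\unsafeset$ stay inside $\unsafeset$; it says nothing about where a discrete waypoint $x_t\in\unsafeset$ sits relative to $\partial\unsafeset$. A sampled unsafe trajectory can have its sole unsafe waypoint arbitrarily close to $\partial\unsafeset$ regardless of how thick $\unsafeset$ is, so the cell containing $x_t$ may straddle the boundary and carry $\occ^\star=0$, breaking feasibility of your witness. Your proposed fix of additionally bounding the grid diameter by thickness still does not force that cell to be wholly inside $\unsafeset$; what you actually need is grid alignment (the paper's Assumption~1, introduced only later for Theorem~\ref{thm:c2d}), not thickness. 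In short, the witness argument cannot be closed with the corollary's stated hypotheses, whereas the paper sidesteps the issue entirely by working at the state level through Theorem~\ref{thm:umaxshell} and never touching cell geometry.
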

\vspace{-5pt}


If we use continuous trajectories directly, the guaranteed learnable set $\unsafesure$ shrinks to a subset of the boundary of the unsafe set, $\partial \unsafeset$ (c.f. Corollary \ref{thm:continuous_learnability}). However, if we discretize these trajectories, we can learn unsafe states lying in the interior, at the cost of conservativeness holding only for a padded unsafe set. For the following results, we make two assumptions, which are illustrated in Figs. \ref{fig:assumptions1} and \ref{fig:assumptions2} for clarity:

\noindent\textbf{Assumption 1}: The unsafe set $\unsafeset$ is aligned with the grid (i.e. there does not exist a grid cell $\grid$ containing both safe and unsafe states).

\noindent\textbf{Assumption 2}: The time discretization of the unsafe trajectory $\xi:[0,T]\rightarrow\statespace$, $\{t_1, \ldots, t_N\},t_i\in[0,T]$, for all $i$, is chosen such that there exists at least one discretization point for each cell that the continuous trajectory passes through (i.e. if $\exists t \in [0, T]$ such that $\xi(t) \in z$, then $\exists t_i \in \{t_1, \ldots, t_N\}$ such that $\traj(t_i)\in z$.
\vspace{-5pt}
\begin{theorem}[Continuous-to-discrete time conservativeness]\label{thm:c2d}
	Suppose that both Assumptions 1 and 2 hold. Then, the learned guaranteed unsafe set $\unsafesurelearned$, defined in Section \ref{sec:integer_program}, is contained within the true unsafe set $\unsafeset$.
	
	Now, suppose that only Assumption 1 holds. Furthermore, suppose that Problems 2-4 are using $M$ sub-trajectories sampled with Algorithm \ref{alg:hnr} as unsafe trajectories, and that each sub-trajectory is defined over the time interval $[a_i, b_i], i = 1,\ldots,M$. Denote $f_{\Delta x}([t_1,t_2]) \doteq \sup_{\state \in \statespace, \control \in \controlset, t\in[t_1,t_2]} \Vert f(\state, \control, t) \Vert\cdot(t_2-t_1)
$, and denote $[a^*, b^*]\doteq [a_j, b_j]$, where $j = \max_{i} f_{\Delta x}([a_i, b_i])$.
	Then, the learned guaranteed unsafe set $\unsafesurelearned$ is contained within the $f_{\Delta x}([a^*,b^*])$-padded unsafe set, $\unsafeset(f_{\Delta x}([a^*,b^*]))$.
\end{theorem}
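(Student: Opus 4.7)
The plan is to prove both parts by contrapositive: for each cell $z$ outside the claimed superset, I exhibit a feasible assignment to Problem \ref{prob:feasibility_program} with $\occ(z)=0$, which by the characterization of $\unsafesurelearned$ (cells forced to be unsafe in \emph{every} feasible assignment) implies $z\notin\unsafesurelearned$.

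For the first part, under Assumptions 1 and 2, I would argue that the ``ground-truth'' assignment $\occ^\star(\tilde z)\doteq\mathbf{1}[\tilde z\subseteq\unsafeset]$ is feasible. By Assumption 1, every cell is entirely safe or entirely unsafe, so the samples of any safe demonstration lie in cells with $\occ^\star=0$, satisfying the safe-demonstration constraint. For any unsafe sub-trajectory $\xi_k$, since it has strictly lower cost than the (optimal) demonstration while obeying all known constraints, it must enter $\unsafeset$ at some continuous time $t^*_k\in[a_k,b_k]$; by Assumption 1 that point lies in a cell entirely contained in $\unsafeset$, and by Assumption 2 the discretization of $\xi_k$ contains a sample in that very cell, so $\sum\occ^\star\ge 1$ along $\xi_k$'s discretized cells. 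Thus $\occ^\star$ is feasible, and any cell $z\not\subseteq\unsafeset$ admits the feasible completion $\occ^\star(z)=0$, giving $\unsafesurelearned\subseteq\unsafeset$.

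For the second part, set $\gamma\doteq f_{\Delta x}([a^*,b^*])$ and take $z\in\unsafesurelearned$. First, $z$ must appear on the discrete-sample sequence of some unsafe sub-trajectory $\xi_k$: otherwise, starting from any existing feasible solution $\occ$ and toggling $\occ(z)$ to $0$ preserves feasibility (the unsafe-trajectory sums do not reference $z$, and the safe-demonstration constraints either do not reference $z$ or already force $\occ(z)=0$), contradicting $z\in\unsafesurelearned$. Let $\xi_k(t_{j,k})\in z$ be that sample and let $t^*_k\in[a_k,b_k]$ satisfy $\xi_k(t^*_k)\in\unsafeset$, which exists because $\xi_k$ has strictly lower cost than the optimal demonstration while obeying all known constraints and hence must violate the unknown safety constraint. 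Integrating the dynamics yields
\begin{equation*}
\|\xi_k(t_{j,k})-\xi_k(t^*_k)\| \;\le\; \sup_{x,u,t}\|f(x,u,t)\|\cdot|t_{j,k}-t^*_k| \;\le\; f_{\Delta x}([a_k,b_k]) \;\le\; \gamma,
\end{equation*}
so the witness $\xi_k(t_{j,k})$ lies within $\gamma$ of $\unsafeset$, hence (via the nearest-point projection onto $\partial\unsafeset$ together with Definitions \ref{def:offset}--\ref{def:buffered_set}) belongs to $\unsafeset(\gamma)$; this places $z$ inside the padded unsafe set, giving the claimed inclusion.

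The main obstacle is translating the continuous bound ``within $\gamma$ of $\unsafeset$'' into the outward-normal-offset form used by Definitions \ref{def:offset}--\ref{def:buffered_set}: for a Lipschitz boundary this follows from the nearest-point projection being realized along a normal direction, with Definition \ref{def:thickness}'s subgradient replacement handling non-smooth corners. A secondary subtlety is that cells straddling $\partial\unsafeset(\gamma)$ need not be fully contained in $\unsafeset(\gamma)$; my argument produces a witness point in $\unsafeset(\gamma)$ for each learned cell, so the inclusion is meant in the sense that $\unsafesurelearned$, identified with the union of its cells, agrees with $\unsafeset(\gamma)$ up to a one-cell tolerance that vanishes as the grid is refined.
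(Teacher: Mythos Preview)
Your proposal is correct and follows essentially the same approach as the paper: both parts hinge on the fact that every sampled unsafe sub-trajectory must enter $\unsafeset$ in continuous time, and the second part bounds the distance between a discrete sample and that entry point by integrating the dynamics over the sub-trajectory's time window (the paper packages this as a separate ``Maximum distance'' lemma, but the content is your displayed inequality). Your framing via the ground-truth occupancy $\occ^\star$ being a feasible witness is a slightly cleaner way to organize Part 1 than the paper's direct argument, and your explicit acknowledgment of the cell-versus-point tolerance in Part 2 is a subtlety the paper's proof glosses over; otherwise the two arguments coincide.
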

\vspace{-10pt}
\begin{corollary}[Continuous-to-discrete feature space conservativeness]\label{thm:feature}
	Let the feature mapping $\phi(\state)$ from the state space to the constraint space be Lipschitz continuous with Lipschitz constant $L$. Then, under Assumptions 1 and 2 used in Theorem \ref{thm:c2d}, our method recovers a subset of the $Lf_{\Delta x}([a^*,b^*])$-padded unsafe set in the feature space, $\unsafeset(Lf_{\Delta x}([a^*,b^*]))$, where $[a^*,b^*]$ is as defined in Theorem \ref{thm:c2d}.
\end{corollary}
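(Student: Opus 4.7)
The plan is to derive Corollary \ref{thm:feature} as a feature-space reprise of the padded half of Theorem \ref{thm:c2d}, with the Lipschitz constant $L$ converting state-space displacement bounds into feature-space displacement bounds. The key observation is that Problems \ref{prob:feasibility_program}--\ref{prob:continuous_relaxation} are posed on the grid over $\constraintspace$ and only see trajectories through their images $\phi\circ\xi$. Therefore $\unsafesurelearned$ actually lives in $\constraintspace$, and Assumption 1 (resp.\ Assumption 2) should be re-read as ``$\unsafeset\subseteq\constraintspace$ is aligned with the feature-space grid'' (resp.\ as a discretization assumption on $\phi\circ\xi$ in $\constraintspace$).

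First, I would bound the per-sub-trajectory displacement in feature space. For any sub-trajectory $\xi_{{\neg s}_i}:[a_i,b_i]\to\statespace$ produced by Algorithm \ref{alg:hnr}, Lipschitz continuity of $\phi$ together with the definition of $f_{\Delta x}$ yields, for any $t,t'\in[a_i,b_i]$,
\[\Vert \phi(\xi_{{\neg s}_i}(t')) - \phi(\xi_{{\neg s}_i}(t))\Vert \;\le\; L\,\Vert \xi_{{\neg s}_i}(t') - \xi_{{\neg s}_i}(t)\Vert \;\le\; L\,f_{\Delta x}([a_i,b_i]).\]
In particular, the worst-case feature-space displacement across all sampled sub-trajectories is $Lf_{\Delta x}([a^*,b^*])$, where $[a^*,b^*]$ is the state-space maximizer defined in Theorem \ref{thm:c2d}.

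Next, I would replay the padded argument of Theorem \ref{thm:c2d} inside $\constraintspace$, substituting $Lf_{\Delta x}([a^*,b^*])$ everywhere the original proof used $f_{\Delta x}([a^*,b^*])$ as the per-sub-trajectory displacement bound. Because that argument only relies on (i) the displacement bound between consecutive sampled points along an unsafe sub-trajectory and (ii) the grid-alignment of $\unsafeset$ (Assumption 1), it carries over verbatim: every cell of $\constraintspace$ that the integer program is forced to label unsafe must lie within $Lf_{\Delta x}([a^*,b^*])$ of some point of $\unsafeset$, in the sense of Definitions \ref{def:offset} and \ref{def:buffered_set}. This yields exactly
$\unsafesurelearned \subseteq \unsafeset\bigl(Lf_{\Delta x}([a^*,b^*])\bigr)$,
which is the claim.

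The main obstacle I anticipate is the bookkeeping of which assumption is interpreted in which space: Assumptions 1 and 2 of Theorem \ref{thm:c2d} are stated for $\statespace$, whereas the learned unsafe set and the grid in question for this corollary live in $\constraintspace$, and the two are linked only through the (possibly many-to-one) Lipschitz map $\phi$. Once these assumptions are re-instantiated on $\constraintspace$ and the Lipschitz bound is in hand, the substitution $f_{\Delta x}\mapsto Lf_{\Delta x}$ in the proof of Theorem \ref{thm:c2d} is mechanical and delivers the corollary.
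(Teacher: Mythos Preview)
Your proposal is correct and follows essentially the same route as the paper: use Lipschitz continuity of $\phi$ to convert the state-space displacement bound $f_{\Delta x}([a^*,b^*])$ into the feature-space bound $Lf_{\Delta x}([a^*,b^*])$, and then invoke the padded-set conclusion of Theorem~\ref{thm:c2d}. The paper's own proof is a two-line application of the Lipschitz inequality on top of Theorem~\ref{thm:c2d}; your version is more explicit about re-instantiating Assumptions~1 and~2 on the feature-space grid and about replaying the argument inside $\constraintspace$, which the paper leaves implicit.
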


\vspace{-22pt}
\section{Evaluations}\label{sec:results}
\vspace{-8pt}
We provide an example showing the importance of using unsafe trajectories, and experiments showing that our method generalizes across system dynamics, that it works with discretization and suboptimal demonstrations, and that it learns a constraint in a feature space from a single demonstration.  See Appendix \ref{sec:app_experiments} for parameters, cost functions, the dynamics, control constraints, and timings.

\noindent\textbf{Version space example:} Consider a simple $5\times 5$ 8-connected grid world in which the tasks are to go from a start to a goal, minimizing Euclidean path length while staying out of the unsafe ``U-shape", the outline of which is drawn in black (Fig. \ref{fig:versionspace}). Four demonstrations are provided, shown in Fig. \ref{fig:versionspace} on the far left. Initially, the version space contains $2^{25}$ possible unsafe sets. Each safe trajectory of length $T$ reduces the version space at most by a factor of $2^{T}$, invalidating at most $2^{25}-2^{25-T}$ possible unsafe sets. Unsafe trajectories are computed by enumerating the set of trajectories going from the start to the goal at lower cost than the demonstration. The numbers of unsafe sets consistent with the safe and unsafe trajectories for varying numbers of safe trajectories are given in Table \ref{table:version_space}.

\begin{wrapfigure}{R}{0.37\linewidth}
\small
\begin{tabular}{ | p{1cm} || c |c| c| c| } 
\hline
 & 1 & 2 & 3 &  4 \\ 
\hline
Safe & 262144 & 4096 & 1024 & 256 \\ 
\hline
Safe \& unsafe & 11648 & 48 & 12 & 3 \\
\hline
\end{tabular}
\vspace{-7pt}
\captionof{table}{Number of consistent unsafe sets, varying the no. of demonstrations, using/not using unsafe trajectories. }
\label{table:version_space}
\end{wrapfigure}

Ultimately, it is impossible to distinguish between the three unsafe sets on the right in Fig. \ref{fig:versionspace}. This is because there exists no task where a trajectory with cost lower than the demonstration can be sampled which only goes through one of the two uncertain states. Further, though the uncertain states are in the $\Delta x$ shell of the constraint, due to the limitations of the cost function, we can only learn a subset of that shell (c.f. Theorem \ref{thm:umaxshell}).

There are two main takeaways from this experiment. First, by generating unsafe trajectories, we can reduce the uncertainty arising from the ill-posedness of constraint learning: after 4 demonstrations, using unsafe demonstrations enables us to reduce the number of possible constraints by nearly a factor of 100, from 256 to 3. Second, due to limitations in the cost function, it may be impossible to recover a unique unsafe set, but the version space can be reduced substantially by sampling unsafe trajectories.

\begin{figure}[t!]
  \centering
  \includegraphics[width=1\linewidth]{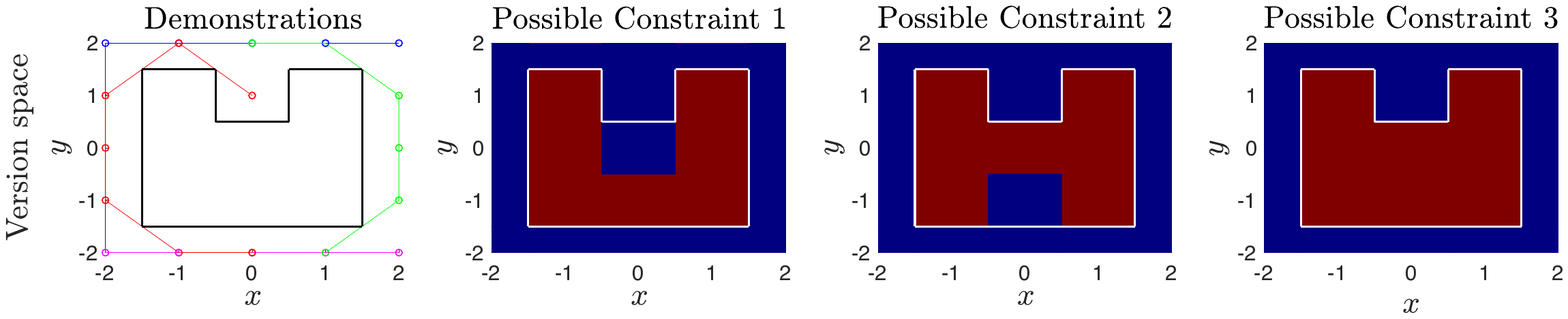}
  \vspace{-20pt}
  \caption{\textbf{Leftmost}: Demonstrations and unsafe set. \textbf{Rest}: Set of possible constraints. Postulated unsafe cells are plotted in red, safe states in blue.}
  \label{fig:versionspace}
  \vspace{-20pt}
\end{figure}%




\noindent\textbf{Dynamics and discretization:} Experiments in Fig. \ref{fig:experiment_tot} show that our method can be applied to several types of system dynamics, can learn non-convex/multiple unsafe sets, and can use continuous trajectories. The dynamics, control constraints, and cost functions for each experiment are given in Table \ref{table:experiment_dynamics} in Appendix \ref{sec:app_experiments}. All unsafe sets $\unsafeset$ are open sets. We solve Problems \ref{prob:integer_program} and \ref{prob:continuous_relaxation}, with an energy function promoting smoothness by penalizing squared deviations of the occupancy of a grid cell $\grid_i$ from its 4-connected neighbors $N(\grid_i)$: $\sum_{i=1}^G \sum_{\grid_j \in N(\grid_i)} \Vert \occ(\grid_i) - \occ(\grid_j) \Vert_2^2$. In all experiments, the mean squared error (MSE) is computed as $\frac{1}{G}\sqrt{\sum_{i=1}^G \Vert \occ(\grid_i)^* - \occ(\grid_i)\Vert_2^2}$, where $\occ(\grid_i)^*$ is the ground truth occupancy. The demonstrations are color-matched with their corresponding number on the $x$-axis of the MSE plots. For experiments with more demonstrations, only those causing a notable change in the MSE were color-coded. The learned guaranteed unsafe states $\unsafesurelearned$ are colored red on the left column.

We recover a non-convex ``U-shaped" unsafe set in the state space using trivial 2D single-integrator dynamics (row 1 of Fig. \ref{fig:experiment_tot}). The solutions to both Problems \ref{prob:continuous_relaxation} and \ref{prob:integer_program} return reasonable results, and the solution of Problem \ref{prob:integer_program} achieves zero error. The second row shows learning two polyhedral unsafe sets in the state space with 4D double integrator linear dynamics, yielding similar results. We note the linear interpolation of some demonstrations in row 1 and 2 enter $\unsafeset$; this is because both sets of dynamics are in discrete time and only the discrete waypoints must stay out of $\unsafeset$. The third row shows learning a polyhedral unsafe set in the state space, with time-discretized continuous, nonlinear Dubins' car dynamics, which has a 3D state $x \doteq \begin{bmatrix} \chi & y & \theta \end{bmatrix}^\top$. These dynamics are more constrained than the previous cases, so sampling lower cost trajectories becomes more difficult, but despite this we can still achieve near zero error solving Problem \ref{prob:integer_program}. Some over-approximation results from some sampled unsafe trajectories entering regions not covered by the safe trajectories. For example, the cluster of red blocks to the top left of $\unsafeset$ is generated by lower-cost trajectories that trade off the increased cost of entering the top left region by entering $\unsafeset$. This phenomenon is consistent with Theorem \ref{thm:app_c2d}; we recover a set that is contained within $\unsafeset(f_{\Delta x}[(0,T_\textrm{max}])$ (the maximum trajectory length $T_\textrm{max}$ was 14.1 seconds). Learning curve spikes occur when overapproximation occurs. Overall, we note $\unsafesurelearned$ tends to be a significant underapproximation of $\unsafeset$ due to the chosen cost function and limited demonstrations. For example, in row 1 of Fig. \ref{fig:experiment_tot}, $\unsafesurelearned$ cannot contain the portion of $\unsafeset$ near long straight edges, since there exists no shorter path going from any start to any goal with only one state within that region. For row 3 of Fig. \ref{fig:experiment_tot}, we learn less of the bottom part of $\unsafeset$ due to most demonstrations' start and goal locations making it harder to sample feasible control trajectories going through that region; with more demonstrations, this issue becomes less pronounced.



\begin{figure}[t!]
  \centering
  \includegraphics[width=\linewidth]{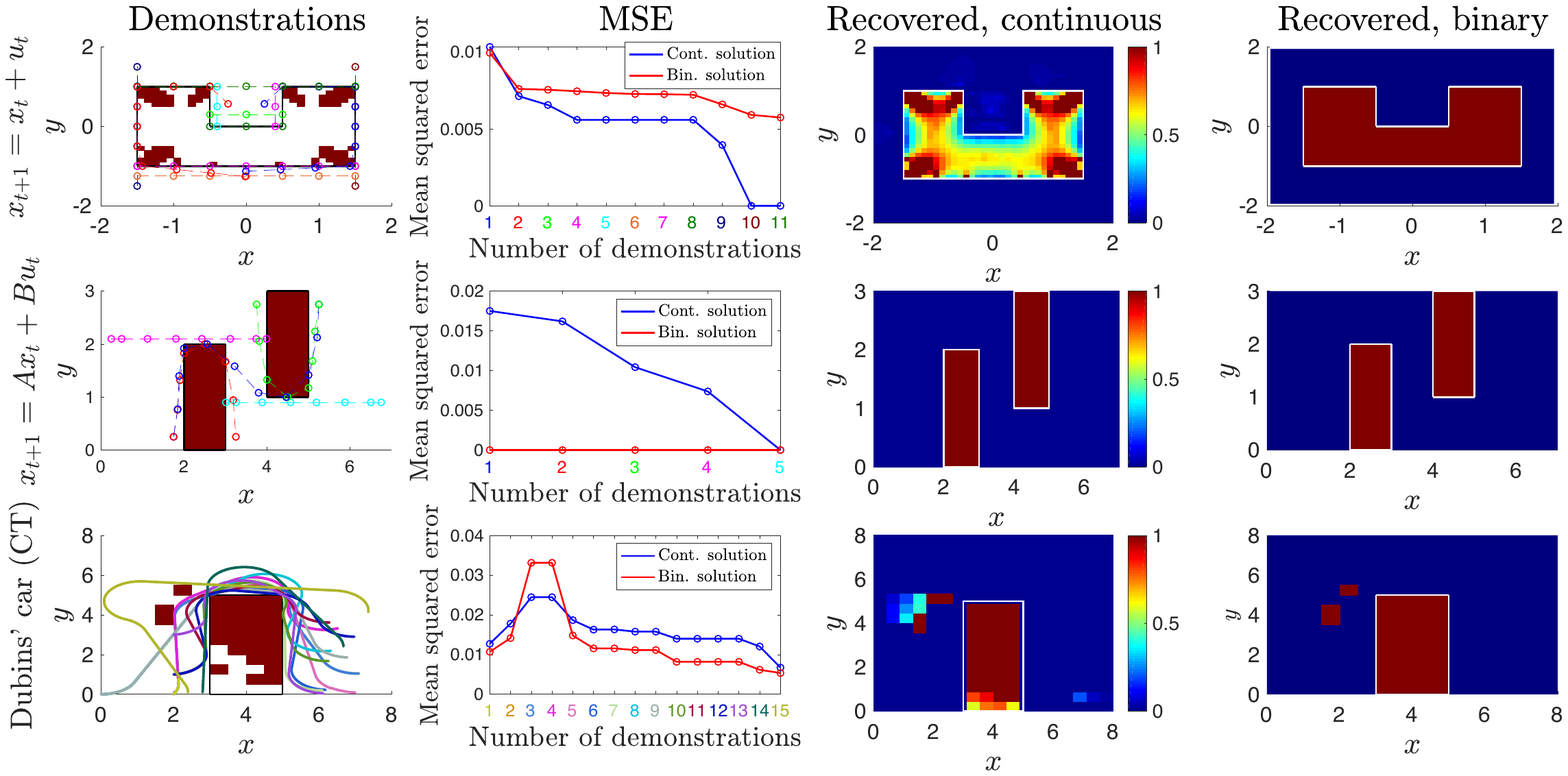}
  \vspace{-20pt}
  \caption{Results across dynamics, discretization. \textbf{Rows (top-to-bottom)}: Single integrator; double integrator; Dubins' car (CT). \textbf{Columns, left-to-right}: Demos., $\unsafeset$, $\unsafesurelearned$; MSE; Problem \ref{prob:continuous_relaxation} solution, all demos.; Problem \ref{prob:integer_program} solution, all demos.}
  \label{fig:experiment_tot}
  \vspace{-10pt}
\end{figure}%



\noindent\textbf{Suboptimal human demonstrations:} We demonstrate our method on suboptimal demonstrations collected via a driving simulator, using a car model with CT Dubins' car dynamics. Human steering commands were recorded as demonstrations, where the task was to navigate around the orange box and drive between the trees (Fig. \ref{fig:suboptimal}). For a demonstration of cost $c$, trajectories with cost less than $0.9c$ were believed unsafe with probability 1. Trajectories with cost $c'$ in the interval $[0.9c, c]$ were believed unsafe with probability $1-((c' - 0.9 c)/0.1c)$. MSE for Problem \ref{prob:continuous_relaxation} is shown in Fig. \ref{fig:suboptimal} (Problem \ref{prob:integer_program} is not solved since the probabilistic interpretation is needed). The maximum trajectory length $T_\textrm{max}$ is $19.1$ seconds; hence, despite suboptimality, the learned guaranteed unsafe set is a subset of $\unsafeset(f_{\Delta x}([0, T_\textrm{max}])$. While the MSE is highest here of all experiments, this is expected, as trajectories may be incorrectly labeled safe/unsafe with some probability.
\begin{figure}[t!]
  \centering
  \includegraphics[width=1\linewidth]{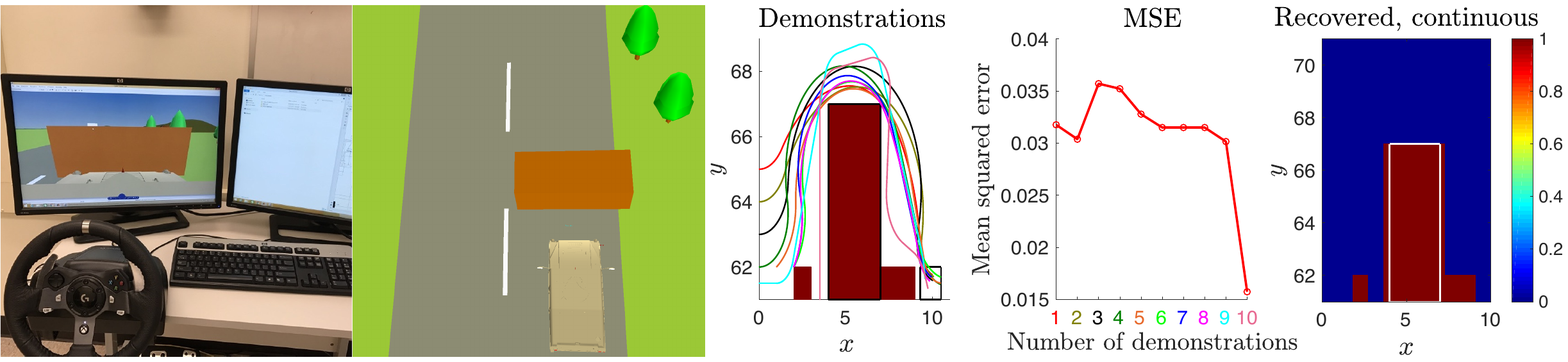}
  \vspace{-20pt}
  \caption{Suboptimal demonstrations: \textbf{left}: setup, \textbf{center}: demonstrations, $\unsafeset$, $\unsafesurelearned$, \textbf{center-right}: MSE, \textbf{right}: solution to Problem \ref{prob:continuous_relaxation}.}
  \label{fig:suboptimal}
  \vspace{-20pt}
\end{figure}

\noindent\textbf{Feature space constraint:} We demonstrate that our framework is not limited to the state space by learning a constraint in a feature space. Consider the scenario of planning a safe path for a mobile robot with continuous Dubins' car dynamics through hilly terrain, where the magnitude of the terrain's slope is given as a feature map (i.e. $\phi(x) = \Vert \partial H(\hat x)/\partial \hat x \Vert_2$, where $\hat x=[\chi\ y]^\top$ and $H(\hat x)$ is the elevation map). The robot will slip if the magnitude of the terrain slope is too large, so we generate a demonstration which obeys the ground truth constraint $\phi(\state) < 0.05$; hence, the ground truth unsafe set is $\unsafeset \doteq \{\state\ |\ \phi(\state) \ge 0.05\}$. From one safe trajectory (Fig. \ref{fig:transfer}) generated by RRT* \cite{rrt_star} and gridding the feature space as $\{0, 0.005, \ldots, 0.145, 0.15\}$, we recover the constraint $\phi(\state) < 0.05$ exactly.
\begin{wrapfigure}{r}{0.52\linewidth}
  \centering
    \vspace{-2pt}
  \includegraphics[width=\linewidth]{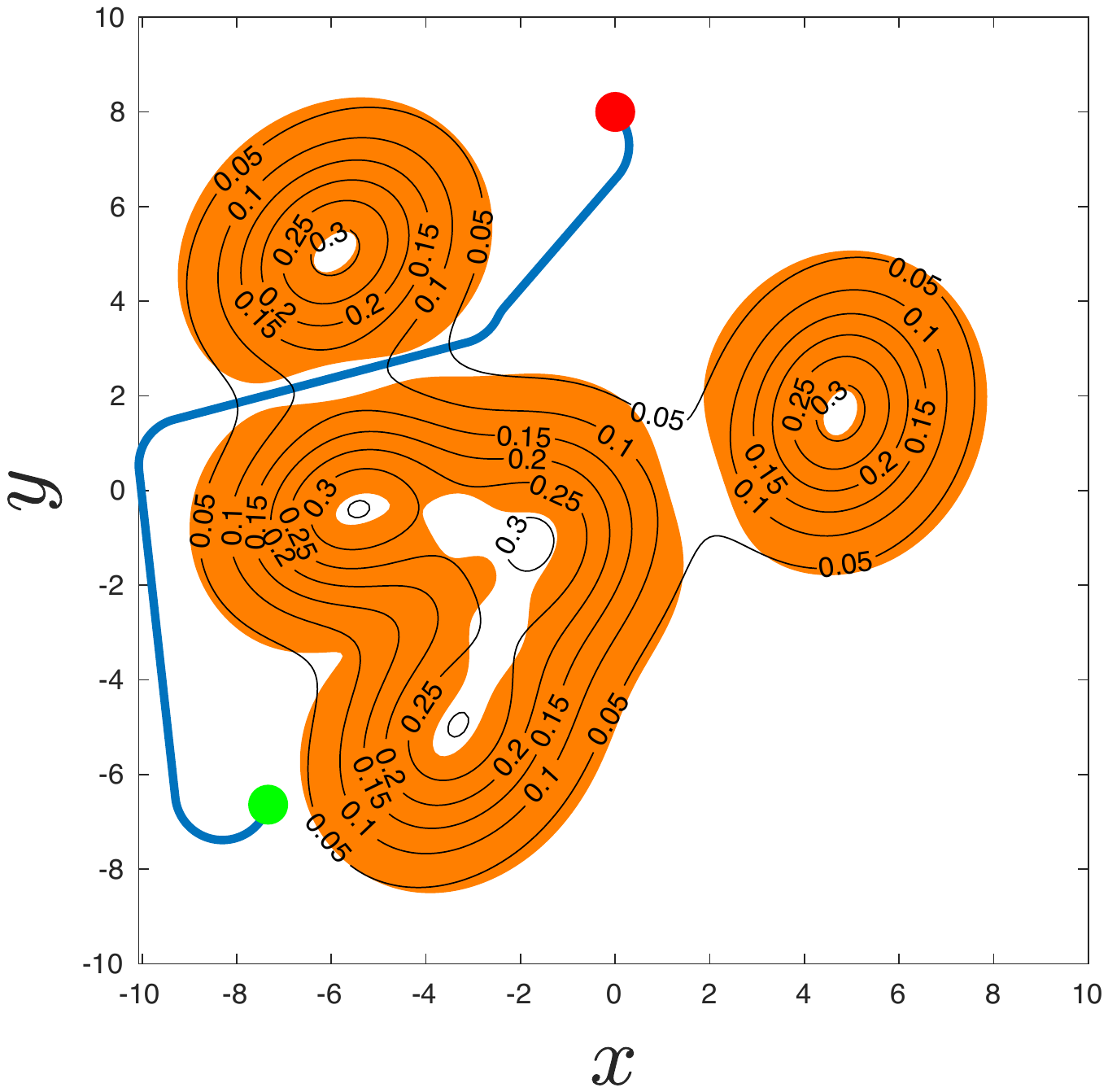}
  \vspace{-20pt}
  \caption{Demonstration (red: start, green: goal). Unsafe set $\unsafeset$ is plotted in orange. Terrain isocontours $H(x)=\textrm{const}$ are overlaid.}
  \label{fig:transfer}
  \vspace{-10pt}
\end{wrapfigure}
%

\vspace{-20pt}
\section{Conclusion}
\vspace{-5pt}
In this paper we propose an algorithm that learns constraints from demonstrations, which acts as a complementary method to IOC/IRL algorithms. We analyze the properties of our algorithm as well as the theoretical limits of what subset of an unsafe set can be learned from safe demonstrations. The method works well on a variety of system dynamics and can be adapted to work with suboptimal demonstrations. We further show that our method can also learn constraints in a feature space. The largest shortcoming of our method is the constraint space gridding, which yields a complex constraint representation and causes the method to scale poorly to higher dimensional constraints. We aim to remedy this issue in future work by developing a grid-free counterpart of our method for convex unsafe sets, which can directly describe standard pose constraints like task space regions \cite{tsr}.
\vspace{-25pt}

\section*{Acknowledgements}
\vspace{-5pt}
This work was supported in part by a Rackham first-year graduate fellowship, ONR grants N00014-18-1-2501 and N00014-17-1-2050, and NSF grants CNS-1446298, ECCS-1553873, and IIS-1750489.
\vspace{-15pt}
\bibliography{refs}
\bibliographystyle{abbrv}

\appendix
\section{Analysis}

A brief overview of the most important results in this section:
\begin{itemize}
	\item Theorem \ref{thm:app_umaxshell} shows that all states that can be guaranteed unsafe must lie within some distance to the boundary of the unsafe set. Corollary \ref{thm:app_continuous_learnability} shows that the set of guaranteed unsafe states shrinks to a subset of the boundary of the unsafe set when using a continuous demonstration directly to learn the constraint.
	\item Corollary \ref{thm:app_conservative} shows that for the discrete time case and the continuous, non-discretized case, our estimate of the unsafe set is a guaranteed underapproximation of the true unsafe set if the unsafe set is sufficiently ``thick".
	\item For continuous trajectories that are then discretized, Theorem \ref{thm:app_c2d} shows us that the guaranteed unsafe set can be made to contain states on the interior of the unsafe set, but at the cost of potentially labeling states within some distance outside of the unsafe set as unsafe as well.
\end{itemize}

For convenience, we repeat the definitions and include some illustrations for the sake of visualization.
\vspace{-10pt}
\subsection{Learnability}\label{sec:app_learnability}


In this section, we will provide analysis on the learnability of unsafe sets, given the known constraints and cost function. Most of the analysis will be based off unsafe sets defined over the state space, i.e. $\unsafeset \subseteq \statespace$, but we will extend it to the feature space in Corollary \ref{thm:app_feature}. If a state $\state$ can be learned to be guaranteed unsafe, then we denote that $\state \in \unsafesure$, where $\unsafesure$ is the set of all states that can be learned guaranteed unsafe. 

We begin our analysis with some notation. 

\begin{definition}[Signed distance]
	Signed distance from point $p \in \mathbb{R}^m$ to set $\mathcal{S} \subseteq \mathbb{R}^m$, $\sd(p, \mathcal{S}) = -\inf_{y \in \partial\mathcal{S}} \Vert p - y \Vert$ if $p\in \mathcal{S}$; $\inf_{y \in \partial\mathcal{S}} \Vert p - y \Vert$ if $p\in \mathcal{S}^c$.
\end{definition}

The following theorem describes the nature of $\unsafesure$:


%

\begin{theorem}[Learnability (discrete time)]\label{thm:app_umaxshell}
	For trajectories generated by a discrete time dynamical system satisfying $\Vert x_{t+1} - x_t \Vert \le \umax$ for all $t$, the set of learnable guaranteed unsafe states is a subset of the outermost $\umax$ shell of the unsafe set:
		$\unsafesure \subseteq \{ x \in \unsafeset \ |\ -\umax \le \sd(x, \unsafeset) \le 0 \}$.
\end{theorem}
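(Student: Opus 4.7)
My plan is to prove the contrapositive: if $x \in \unsafeset$ has $\sd(x,\unsafeset) < -\umax$, then $x$ is not learnable guaranteed unsafe. Equivalently, I need to exhibit an alternative unsafe set hypothesis $\unsafeset' \neq \unsafeset$ that is consistent with every safe demonstration and every unsafe trajectory our sampler could produce, yet does not contain $x$. Exhibiting such a $\unsafeset'$ means that no combination of data can single $x$ out as unsafe, which is exactly $x \notin \unsafesure$.

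The candidate I would use is simply $\unsafeset' \doteq \unsafeset \setminus \{x\}$. Since $\unsafeset' \subseteq \unsafeset$, every safe demonstration trivially remains safe with respect to $\unsafeset'$, so safe-side consistency is immediate. The real content is on the unsafe side: I must show that every sampled unsafe trajectory $\traj = \{x_1,\ldots,x_T\}$ still intersects $\unsafeset'$, i.e., contains at least one unsafe state other than $x$.

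This is where I would invoke the hypothesis $\sd(x,\unsafeset) < -\umax$. An unsafe trajectory in our setup satisfies the task start/goal constraints, which live in the safe set, so $\traj$ must transition from $\safeset$ into $\unsafeset$ at some step: there is an index $s$ with $x_{s-1} \in \safeset$ and $x_s \in \unsafeset$ (analogously for the exit transition). The step bound $\Vert x_s - x_{s-1}\Vert \le \umax$ combined with $x_{s-1} \in \safeset$ forces $\sd(x_s,\unsafeset) \ge -\umax$. By assumption $\sd(x,\unsafeset) < -\umax$, so $x_s \neq x$, which gives $x_s \in \unsafeset'$ and keeps $\traj$ unsafe under $\unsafeset'$. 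Hence $\unsafeset'$ is consistent with the data, $x$ is not forced to be in every consistent unsafe set, and $x \notin \unsafesure$, which upper-bounds $\unsafesure$ by the $\umax$-shell $\{x\in\unsafeset \mid -\umax \le \sd(x,\unsafeset)\le 0\}$.

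The main obstacle I expect is the handling of degenerate trajectories, where the simple transition argument needs small adjustments: (i) trajectories that lie entirely in $\unsafeset$ are ruled out by the task constraints already baked into $\trajset_{\safeset_\task}$, and (ii) trajectories that touch $\partial\unsafeset$ tangentially still produce some entry/exit step landing within $\umax$ of $\partial\unsafeset$, which is all the argument needs. A minor technical point, if one prefers to keep $\unsafeset$ closed or open, is to replace $\{x\}$ by a small ball $B(x,\epsilon)$ with $\epsilon + \umax < -\sd(x,\unsafeset)$; the same reasoning goes through because $B(x,\epsilon)$ is then strictly separated from the $\umax$-shell that every unsafe trajectory must visit.
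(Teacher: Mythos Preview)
Your proof is correct and rests on the same geometric observation as the paper's: any state that can be singled out as unsafe must sit within $\umax$ of a safe state, hence in the outer shell. The paper argues this directly---if $x_i$ is the lone state on an unsafe trajectory not already known safe, then its trajectory neighbors $x_{i-1},x_{i+1}$ are safe and at distance $\le\umax$, so $x_i$ lies in the shell. You instead take the contrapositive and make the version-space reasoning explicit: for a deep $x$ you exhibit the alternative hypothesis $\unsafeset'=\unsafeset\setminus\{x\}$ and verify it remains consistent because every unsafe trajectory must enter $\unsafeset$ via a step from $\safeset$, and that entry point already lands in the shell (hence in $\unsafeset'$). Your framing is a bit more formal, since it checks consistency against \emph{all} unsafe trajectories at once and sidesteps the paper's somewhat informally stated premise that ``$T-1$ states in $\xi$ must be learned safe''; the paper's version is shorter but leans on that premise. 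Either route, the key step is the same entry-point/adjacent-safe-state distance bound.
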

\begin{proof}
	Consider the case of a length $T$ unsafe trajectory $\xi = \{\state_1, \ldots, \state_N\}$, $\state_1 \in \unsafeset \vee \ldots \vee \state_T \in \unsafeset$. For a state to be learned guaranteed unsafe, $T-1$ states in $\xi$ must be learned safe. This implies that regardless of where that unsafe state is located in the trajectory, it must be reachable from some safe state within one time-step. This is because if multiple states in $\xi$ differ from the original safe trajectory $\xi^*$, to learn that one state is unsafe with certainty means that the others should be learned safe from some other demonstration. Say that $\state_1, \ldots, \state_{i-1}, \state_{i+1}, \ldots, \state_T \in \safeset$, i.e. they are learned safe. Since $(\Vert \state_{i+1} - \state_i \Vert \le \umax) \wedge (\Vert \state_{i} - \state_{i-1} \Vert \le \umax)$ and $\state_{i-1}, \state_{i+1} \in \safeset$, $\state_i$ must be within $\umax$ of the boundary of the unsafe set: $-\min_{y\in \partial \unsafeset} \Vert \state_i - y \Vert \ge \umax$, implying $-\umax \le \sd(x_i) \le 0$.
	
\end{proof}

\begin{figure}\label{fig:deltaxshell}
	\centering
	\includegraphics{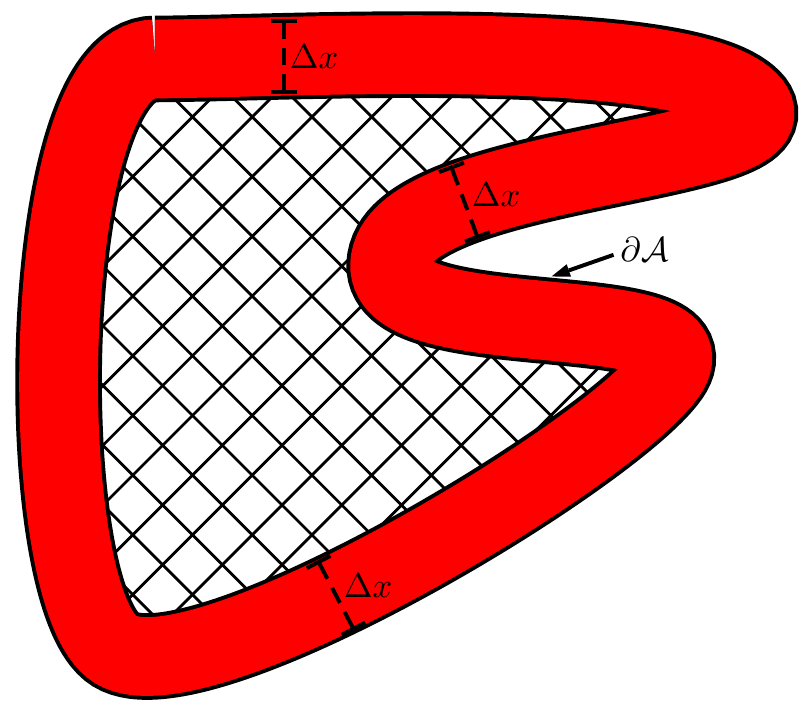}
	\caption{Illustration of the outermost $\Delta x$ shell (shown in red) of the unsafe set $\unsafeset$. The hatched area cannot be learned guaranteed safe.}
	\label{fig:padded}
\end{figure}
\begin{rem}
	For linear dynamics, $\Delta x$ can be found via
	\begin{equation}
		\underset{\state \in \statespace, \control \in \controlset}{\text{maximize}} \quad \Vert A \state + B\control - \state \Vert
	\end{equation}
	
	In the case of general dynamics, an upper bound on $\Delta x$ can be found via
	\begin{equation}
		\Delta x \le \sup_{\state \in \statespace, \control \in \controlset, t \in \{t_0, t_0+1, \ldots, T\}} \Vert f(\state, \control, t) - x\Vert
	\end{equation}
\end{rem}

\begin{corollary}[Learnability (continuous time)]\label{thm:app_continuous_learnability}
	For continuous trajectories $\xi(\cdot): [0, T] \rightarrow \statespace$, the set of learnable guaranteed unsafe states shrinks to the boundary of the unsafe set: $\unsafesure \subseteq \{ x \in \unsafeset \ |\ \sd(x, \unsafeset) = 0 \}$.
\end{corollary}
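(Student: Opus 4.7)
The plan is to reduce Corollary~\ref{thm:app_continuous_learnability} to a continuum version of the argument used for Theorem~\ref{thm:app_umaxshell}. In the discrete case the bound $\Vert \state_{t+1}-\state_t\Vert\le\umax$ on single-step displacement was what constrained $\unsafesure$ to the outer $\umax$-shell; for continuous trajectories $\xi:[0,T]\to\statespace$ the analogous ``adjacent state'' is $\xi(t\pm\epsilon)$, and continuity of $\xi$ forces the spatial separation between adjacent states to tend to zero as $\epsilon\to 0$. Heuristically, Theorem~\ref{thm:app_umaxshell} in the limit $\umax\to 0$ already yields the result $\unsafesure\subseteq\{x\in\unsafeset \mid \sd(x,\unsafeset)=0\}=\partial\unsafeset$; the work is to make this argument directly in continuous time.

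First, I would take an arbitrary $x\in\unsafesure$ and fix a witnessing unsafe continuous trajectory $\xi:[0,T]\to\statespace$ with $\xi(t^{*})=x$ for some $t^{*}\in[0,T]$. Reading the definition of $\unsafesure$ off the continuous analogue of Problem~\ref{prob:feasibility_program}, $x$ is certifiable as guaranteed unsafe from $\xi$ only if every other point of $\xi$ can be separately certified safe by other demonstrations; otherwise the unsafeness constraint ``$\exists\,t\in[0,T]:\xi(t)\in\unsafeset$'' would only give a disjunction that does not pin down $x$. Consequently, one may assume $\xi(t)\in\safeset$ for all $t\in[0,T]\setminus\{t^{*}\}$, which is the continuous counterpart of the $T-1$ states being learned safe in the discrete proof.

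Next, I would argue by contradiction: suppose $\sd(x,\unsafeset)<0$, i.e.\ $x$ lies in the topological interior of $\unsafeset$. Then there exists $\varepsilon>0$ with $B_{\varepsilon}(x)\subseteq\unsafeset$. By continuity of $\xi$ at $t^{*}$, there is $\delta>0$ such that $\xi(t)\in B_{\varepsilon}(x)\subseteq\unsafeset$ for every $t\in(t^{*}-\delta,t^{*}+\delta)\cap[0,T]$. This contradicts the previous paragraph, which required $\xi(t)\in\safeset$ for all $t\neq t^{*}$. Since $x\in\unsafeset$ forces $\sd(x,\unsafeset)\le 0$, the only remaining possibility is $\sd(x,\unsafeset)=0$, i.e.\ $x\in\partial\unsafeset$, proving the claim. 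The main subtlety (not really an obstacle) is the first step, where one must justify that ``all other points of $\xi$ are safe'' is the correct characterization of guaranteed learnability in continuous time; this is a verbatim lift of the discrete reasoning, relying on the same observation that uncertainty among two or more points of a single trajectory produces a disjunctive constraint from which no individual state can be singled out.
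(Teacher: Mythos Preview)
Your proposal is correct, but it takes a genuinely different route from the paper. The paper's own proof is precisely the ``heuristic'' limiting argument you sketch in your first paragraph and then set aside: it views a continuous trajectory as the $\Delta t\to 0$ limit of a discrete-time one, observes that local Lipschitz continuity of the dynamics forces $\umax\doteq\lim_{\Delta t\to 0}\Vert x(t+\Delta t)-x(t)\Vert\to 0$, and then invokes Theorem~\ref{thm:app_umaxshell} with $\umax=0$. That is the entire argument.

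Your direct continuous-time proof---fixing a witnessing $\xi$ with $\xi(t^{*})=x$, arguing that guaranteed learnability forces $\xi(t)\in\safeset$ for all $t\neq t^{*}$, and then using continuity of $\xi$ to contradict $x\in\mathrm{int}(\unsafeset)$---reaches the same conclusion without passing through the discrete result. This buys you two things: you do not need to justify that the set inclusion of Theorem~\ref{thm:app_umaxshell} survives the limit $\umax\to 0$, and you do not need the Lipschitz assumption on the dynamics (mere continuity of $\xi$ suffices). The paper's version is shorter but leans on the reader accepting the limiting step; yours is self-contained and slightly more careful. Either is adequate here.
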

\begin{proof}
	The output trajectory of a continuous time system can be seen as the output of a discrete time system in the limit as the time-step is taken to 0. In this case, as long as the dynamics are locally Lipschitz continuous, $\Delta x \doteq \lim_{\Delta t \rightarrow 0} \Vert \state(t+\Delta t) - \state(t) \Vert \rightarrow 0$ \cite{khalil_nonlinear}, and via Theorem \ref{thm:app_umaxshell}, the corollary is proved.
\end{proof}

It is worth noting that depending on the cost function chosen, $\unsafesure$ can become arbitrarily small; in other words, some cost functions are more informative than others in recovering a constraint. An interesting avenue of future work is to investigate the properties of cost functions that enable more to be learned about the constraints and how this knowledge can help inform reward (or cost) shaping.

\subsubsection{Learnability (dynamics)}\label{sec:app_learnability_dynamics}

%
Depending on the dynamics of the system, it may be impossible to obtain sub-trajectories with few perturbed waypoints from sampling, due to there only being one feasible control sequence that takes the system from a start to a goal state. We formalize this intuition in the following theorem:

\begin{definition}[Forward reachable set]
	\sloppy The forward reachable set $\frs(\state_s, \controlset, T_1, T_2)$ is the set of all states that a dynamical system can reach at time $t = T_2$ starting from $\state_s$ at time $t = T_1$, using controls drawn from an admissible set of controls $\controlset$:
	\begin{equation}
		\frs(\state_s, \controlset, T_1, T_2) \doteq \{ z \in \statespace \ |\ \exists u(t):[T_1, T_2]\rightarrow\controlset, \state_{T_1} = \state_s, \state_{T_2} = z  \}
	\end{equation}
\end{definition}


\begin{theorem}[Learnability (dynamics)]\label{thm:app_learnability_dynamics}
	Let $\state_1^*, \ldots, \state_M^*$ be consecutive waypoints on a safe trajectory $\xi^*$ at times $t_1, \ldots, t_M$, with time discretization $\Delta t_i$ between states $\state_i^*$ and $\state_{i+1}^*$, where all but $\state_1^*, \state_M^*$ are free to move. Then, a necessary condition for being able to sample unsafe trajectories is that $\exists \state_2 \in \frs(\state_1^*, \controlset, t_1, t_1+\Delta t_1), \ldots, \exists \state_{M-1}\in \frs(\state_{M-2}, \controlset,t_{M-2}, t_{M-2}+\Delta t_{M-2}), \state_M^* \in \frs(\state_{M-1}, \controlset, t_{M-1}, t_{M-1}+\Delta t_{M-1})$ such that $\exists i \in \{2, \ldots, M-1\}: \state_i^* \ne \state_i$: i.e. there exists at least one state that the dynamics allow to be moved from the demonstrated trajectory.
\end{theorem}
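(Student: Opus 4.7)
The plan is to argue by contrapositive and to show that the stated chain of forward-reachability inclusions is precisely the formal restatement of ``a dynamically feasible sub-trajectory different from the demonstration exists.'' Since the claim is phrased as a necessary condition, it suffices to assume we can sample an unsafe sub-trajectory with the stated waypoint-freezing pattern and then extract such a chain.

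First I would fix notation for the sub-trajectory setting described in Section \ref{sec:version_space}: when we sample while holding $\state_1^*$ and $\state_M^*$ fixed, every candidate produced by Algorithm \ref{alg:hnr} is a sequence $(\state_1^*, \state_2, \ldots, \state_{M-1}, \state_M^*)$ equipped with a control sequence $(\control_1, \ldots, \control_{M-1})$ with $\control_i \in \controlset$, satisfying the dynamics on each interval $[t_i, t_i+\Delta t_i]$. Unrolling this requirement using the definition of $\frs$ gives exactly $\state_{i+1} \in \frs(\state_i, \controlset, t_i, t_i+\Delta t_i)$ for every $i = 1, \ldots, M-1$, where we identify $\state_1 = \state_1^*$ and $\state_M = \state_M^*$. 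Any sampled candidate that is genuinely new (i.e., different from $\xi^*$) must, because the endpoints are pinned, differ from the demonstration at some interior waypoint, i.e.\ $\exists i \in \{2,\ldots,M-1\}$ with $\state_i \neq \state_i^*$. Combining the reachability chain with this non-equality yields exactly the condition in the theorem statement. The contrapositive then delivers the result: if no such chain exists, every dynamically feasible waypoint sequence from $\state_1^*$ to $\state_M^*$ coincides with the demonstration, so no alternative (hence no unsafe) sub-trajectory can be sampled.

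The hard part will not be a technical computation—the argument is essentially an unrolling of definitions—but rather being precise about two bookkeeping points. First, that the sub-segment sampling procedure really does pin the endpoints $\state_1^*, \state_M^*$ and vary only the interior; this follows directly from the sub-trajectory construction in Section \ref{sec:version_space} together with strict monotonicity of the cost, which is what makes sub-trajectory sampling equivalent to perturbing only the interior waypoints. Second, that the condition is necessary but not sufficient: the existence of such a chain does not by itself guarantee the resulting trajectory has lower cost than $\xi^*$ or is unsafe, so we must not claim more than the theorem does. Beyond these items, the proof should occupy only a few lines.
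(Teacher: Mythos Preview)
Your proposal is correct and follows essentially the same approach as the paper: both argue by contrapositive (the paper phrases it as ``proof by contradiction'') that if no dynamically feasible chain through the forward reachable sets differs from the demonstration at some interior waypoint, then there is no alternate control sequence taking the system from $\state_1^*$ to $\state_M^*$, so no candidate sub-trajectory can be sampled. Your write-up is more careful about the bookkeeping (endpoint pinning, necessity vs.\ sufficiency) than the paper's very terse proof, but the logical skeleton is identical.
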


\begin{proof}
	Proof by contradiction. Assume that there does not exist an $i\in\{2, \ldots, M-1\}$ such that $\state_i \ne \state_i^*$. Then, there exists no alternate sequence of controls taking the system from $\state_1^*$ to $\state_M^*$; hence no trajectories satisfying the start and goal constraints can be satisfied.
	
	Additionally, the same analysis can be used for continuous trajectories in the limit as the time-step between consecutive waypoints, $\Delta t$, goes to 0.
\end{proof}

\begin{rem}
	This implies that when the dynamics are highly restrictive, less of the unsafe set can be learned to be guaranteed unsafe, and the learnable subset of the $\Delta x$-shell of the unsafe set (as described in Theorem \ref{thm:app_umaxshell}) can become small.
\end{rem}

\subsection{Conservativeness}\label{sec:app_conservativeness}

For the analysis in this section, we will assume that the unsafe set has a Lipschitz boundary; informally, this means that $\partial \unsafeset$ can be locally described by the graph of a Lipschitz continuous function. A formal definition can be found in \cite{dacorogna_calc_variations}. We define some notation:

\begin{definition}[Set thickness]\label{def:app_thickness}
	Denote the outward-pointing normal vector at a point $p\in\partial\unsafeset$ as $\hat n(p)$. Furthermore, at non-differentiable points on $\partial \unsafeset$, $\hat{n}(p)$ is replaced by the set of normal vectors for the sub-gradient of the Lipschitz function describing $\partial\unsafeset$ at that point \cite{thickness}. The set $\unsafeset$ has a thickness larger than $d_\text{thick}$ if $\forall x \in \partial \unsafeset, \forall d \in [0, d_\textrm{thick}], \sd(x - d \hat n(x), \unsafeset) \le 0$.
\end{definition}

\begin{definition}[$\gamma$-offset padding]\label{def:app_offset}
	Define the $\gamma$-offset padding $\partial \unsafeset_{\gamma}$ as:
		$\partial \unsafeset_{\gamma} = \{ x \in \statespace \ | \ x = y + d \hat n(y), d\in [0, \gamma], y \in \partial \unsafeset \}$.
\end{definition}

\begin{definition}[$\gamma$-padded set]\label{def:app_buffered_set}
	We define the $\gamma$-padded set of the unsafe set $\unsafeset$, $\unsafeset(\gamma)$, as the union of the $\gamma$-offset padding and $\unsafeset$: $\unsafeset(\gamma) \doteq \partial \unsafeset_\gamma \cup \unsafeset$.
\end{definition}
\begin{figure}
	\centering
	\includegraphics{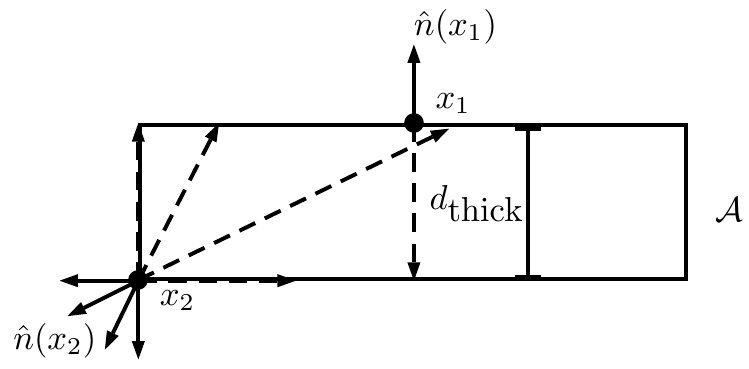}
	\caption{Illustration of thickness, c.f. Definition \ref{def:thickness}.}
	\label{fig:thickness}
\end{figure}

\begin{figure}
	\centering
	\includegraphics{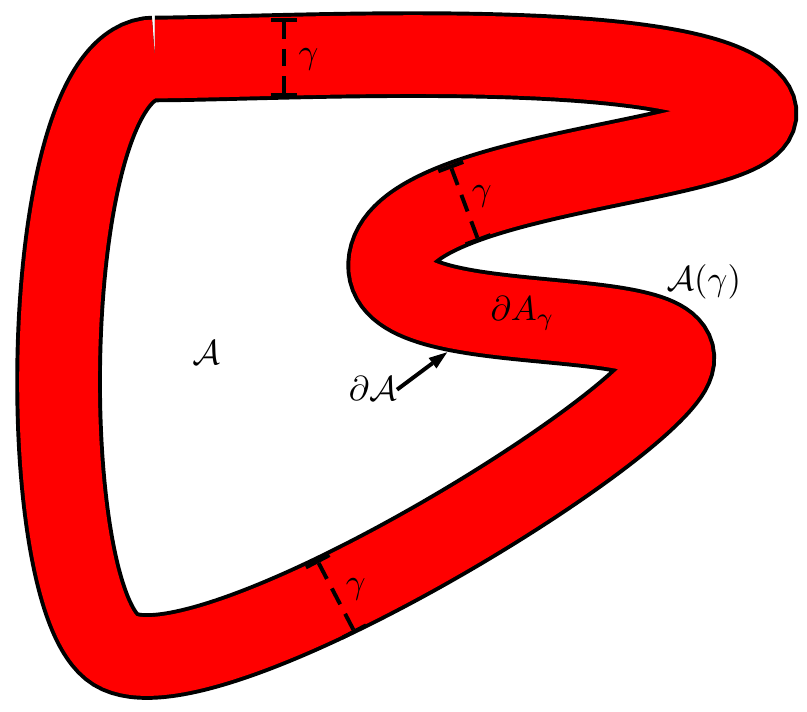}
	\caption{Illustration of the $\gamma$-padded set $\mathcal{A}(\gamma)$, which is the union of the red and white regions. The $\gamma$-offset padding is displayed in red. The original set $\mathcal{A}$ is shown in white.}
	\label{fig:padded}
\end{figure}

\begin{corollary}[Conservative recovery of unsafe set]\label{thm:app_conservative}
	For a discrete-time system, a sufficient condition ensuring that the set of recovered guaranteed unsafe states $\mathcal{A}_l^\textrm{rec}$ is contained in $\unsafeset$ is that $\unsafeset$ has a set thickness greater than or equal to $\umax$ (c.f. Definition \ref{thm:app_umaxshell}).
\end{corollary}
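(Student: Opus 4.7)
The plan is to establish the chain of inclusions $\unsafesurelearned \subseteq \unsafesure \subseteq \unsafeset$, invoking the thickness hypothesis precisely to validate the shell-localization argument of Theorem~\ref{thm:app_umaxshell}.

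First I would show $\unsafesurelearned \subseteq \unsafesure$. By construction in Section~\ref{sec:integer_program}, a cell is placed in $\unsafesurelearned$ iff pinning it safe renders Problem~\ref{prob:feasibility_program} infeasible with the collected safe demonstrations and sampled unsafe trajectories as constraints. Because every sampled unsafe trajectory has strictly lower cost than an optimal safe demonstration while satisfying all known constraints, it must violate the unknown constraint and hence intersect $\unsafeset$ in at least one waypoint; therefore the assignment induced by the ground-truth $\unsafeset$ is always a feasible solution to the IP. Any cell the IP forces unsafe must then be forced unsafe in every consistent assignment, in particular in the one that would correspond to the theoretical learnable set $\unsafesure$.

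Next I would invoke Theorem~\ref{thm:app_umaxshell}, which gives $\unsafesure \subseteq \{x \in \unsafeset : -\umax \le \sd(x, \unsafeset) \le 0\} \subseteq \unsafeset$. Its proof localizes a single unsafe waypoint $\state_i$ between two safe waypoints $\state_{i-1}, \state_{i+1}$ at distance at most $\umax$, then concludes $\state_i \in \unsafeset$. This last step is exactly where the thickness hypothesis is needed: if $\unsafeset$ had thickness strictly less than $\umax$, a trajectory could step from a safe point, skewer a thin slab of $\unsafeset$ in one discrete jump, and land on another safe point, so a waypoint sitting on the ``safe'' side of a thin slab could be mistakenly forced unsafe even though it lies outside $\unsafeset$. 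Definition~\ref{def:app_thickness} rules this out: when the thickness is at least $\umax$, for every boundary point $p \in \partial\unsafeset$ the inward extension $p - d\,\hat n(p)$ stays in $\unsafeset$ for all $d \in [0, \umax]$, so no one-step transition between two safe exterior points can bridge $\unsafeset$ through an intermediate waypoint that is itself outside $\unsafeset$. Hence the localized unsafe waypoint really lies in $\unsafeset$, and the stated inclusion is valid.

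Combining the two inclusions yields $\unsafesurelearned \subseteq \unsafeset$. The main obstacle is the first step: rigorously arguing that the integer-program recovery inherits all logical deductions underlying $\unsafesure$, especially when a grid cell could straddle $\partial\unsafeset$. The thickness assumption, together with a grid compatible with $\unsafeset$ in the sense of Assumption~1 later in the excerpt, closes this gap by ruling out cells that simultaneously contain safe and unsafe states, which is what would otherwise create spurious forced-unsafe labels outside $\unsafeset$.
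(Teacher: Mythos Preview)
Your overall route—invoke Theorem~\ref{thm:app_umaxshell} to localize guaranteed-unsafe states to the outer $\umax$-shell, then conclude containment in $\unsafeset$—is exactly the paper's two-sentence proof. Your first paragraph, arguing that the ground-truth occupancy is always a feasible solution of the IP so that any cell forced unsafe must already lie in $\unsafeset$, is in fact more careful than anything the paper spells out and already yields $\unsafesurelearned\subseteq\unsafeset$ on its own.

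The flaw is in your account of where the thickness hypothesis enters. You claim Theorem~\ref{thm:app_umaxshell} needs it to conclude $x_i\in\unsafeset$, because without thickness a one-step jump could ``skewer'' a thin slab of $\unsafeset$ and land on the safe side, causing that landing waypoint to be mislabeled unsafe. In discrete time this scenario cannot arise: unsafeness is $\exists t,\; x_t\in\unsafeset$, so a step from $x_{i-1}\in\safeset$ to $x_i\in\safeset$ that geometrically passes over a thin $\unsafeset$ contributes no unsafe waypoint at all—there is nothing to mislabel. In the theorem's setup the trajectory is genuinely unsafe and every waypoint other than $x_i$ is certified safe by a demonstration, so $x_i\in\unsafeset$ follows by pure elimination, with no appeal to thickness; the bound $\sd(x_i,\unsafeset)\ge -\umax$ comes from one-step reachability from a safe neighbor, and $\sd(x_i,\unsafeset)\le 0$ comes from the elimination argument. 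The paper's proof does not articulate any mechanism for thickness beyond restating the shell containment (and the shell is a subset of $\unsafeset$ by definition regardless of thickness), so you are attempting to supply a justification the paper leaves implicit—but the mechanism you supply does not hold. Your closing appeal to Assumption~1 is likewise imported from the later continuous-to-discrete result (Theorem~\ref{thm:app_c2d}); the discrete-time corollary here neither states nor uses it.
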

\begin{proof}
	Via Theorem \ref{thm:app_umaxshell}, our method will not determine that any state further inside than the outer $\umax$-shell is guaranteed unsafe. If $\unsafeset$ has thickness at least $\umax$, then our method will only determine states that are within the unsafe set to be guaranteed unsafe. This holds for discrete time dynamics and continuous time dynamics as well as $\umax\rightarrow 0$.
\end{proof}

Note that if we deal with continuous trajectories directly, the guaranteed learnable set shrinks to a subset of the boundary of the unsafe set, $\partial \unsafeset$. However, if we discretize these trajectories, we can learn unsafe states lying in the interior, at the cost of conservativeness guarantees holding only for a padded unsafe set.

The following results hold for continuous time trajectories. We begin the discussion with an intermediate result we will need for Theorem \ref{thm:app_c2d}:

\begin{lemma}[Maximum distance]\label{lem:app_maxdist}
	Consider a continuous time trajectory $\traj:[0,T]\rightarrow\statespace$. Suppose it is known that in some time interval $[a, b], a \le b, a, b \in [0, T]$, $\traj$ is unsafe; denote this sub-segment as $\traj([a, b])$. Further denote:
	\begin{equation}\label{eq:maxdist}
		f_{\Delta x}([t_1,t_2]) \doteq \sup_{\state \in \statespace, \control \in \controlset, t\in[t_1,t_2]} \Vert f(\state, \control, t) \Vert\cdot(t_2-t_1)
	\end{equation} 
	Consider any $t \in [a, b]$. Then, the signed distance from $\traj(t)$ to the unsafe set, $\sd(\traj(t), \unsafeset)$, is bounded by $\max(f_{\Delta x}([a,t]), f_{\Delta x}([t,b]))$.
\end{lemma}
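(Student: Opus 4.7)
\textbf{Proof plan for Lemma \ref{lem:app_maxdist}.} The plan is to exploit what ``unsafe sub-segment'' means (there must exist some time $t^\ast\in[a,b]$ with $\xi(t^\ast)\in\unsafeset$), use the dynamics to bound how far $\xi(t)$ can be from $\xi(t^\ast)$, and then convert this displacement bound into a signed-distance bound. The bound is taken to be a \emph{max} over the two sub-intervals $[a,t]$ and $[t,b]$ because the unsafe witness $t^\ast$ could lie on either side of $t$, and we need a statement that is valid without knowing which side.

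First, I would unpack the hypothesis: by the time-separability of the constraint (see the discussion after Problem \ref{prob:fwd_prob}), ``$\xi([a,b])$ is unsafe'' means $\exists t^\ast\in[a,b]$ such that $\xi(t^\ast)\in\unsafeset$. Next I would bound $\|\xi(t)-\xi(t^\ast)\|$ using the integral form of the dynamics $\dot\xi(s)=f(\xi(s),u(s),s)$:
\begin{equation*}
\|\xi(t)-\xi(t^\ast)\|\ \le\ \int_{\min(t,t^\ast)}^{\max(t,t^\ast)}\|f(\xi(s),u(s),s)\|\,ds\ \le\ f_{\Delta x}\!\bigl([\min(t,t^\ast),\max(t,t^\ast)]\bigr),
\end{equation*}
where the second inequality is immediate from the definition of $f_{\Delta x}$ in \eqref{eq:maxdist}. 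I would then observe a monotonicity property of $f_{\Delta x}$ under interval inclusion: if $[t_1,t_2]\subseteq[t_1',t_2']$, then $f_{\Delta x}([t_1,t_2])\le f_{\Delta x}([t_1',t_2'])$, because both the supremum and the length factor grow.

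Now I split into cases. If $t^\ast\ge t$, then $[t,t^\ast]\subseteq[t,b]$ and so the displacement is bounded by $f_{\Delta x}([t,b])$. If $t^\ast\le t$, then $[t^\ast,t]\subseteq[a,t]$ and the displacement is bounded by $f_{\Delta x}([a,t])$. In either case, $\|\xi(t)-\xi(t^\ast)\|\le \max\bigl(f_{\Delta x}([a,t]),f_{\Delta x}([t,b])\bigr)$.

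Finally, I would convert this displacement bound into a bound on the signed distance. If $\xi(t)\in\unsafeset$, then $\sd(\xi(t),\unsafeset)\le 0$, so the claim holds trivially. Otherwise, consider the straight line segment from $\xi(t)$ to $\xi(t^\ast)$: since $\xi(t)\notin\unsafeset$ and $\xi(t^\ast)\in\unsafeset$, by continuity (intermediate value argument) this segment crosses $\partial\unsafeset$ at some point $y$, giving $\sd(\xi(t),\unsafeset)\le \|\xi(t)-y\|\le\|\xi(t)-\xi(t^\ast)\|$. Combining this with the previous step yields the claimed bound. The only subtle piece is the boundary-crossing argument, which needs a mild topological assumption on $\unsafeset$ (e.g.\ the Lipschitz-boundary hypothesis already invoked in Section \ref{sec:app_conservativeness}); I expect that to be the main, though still routine, point to get right.
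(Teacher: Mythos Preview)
Your proof is correct and follows the same underlying idea as the paper's: split the interval $[a,b]$ at $t$ and bound the signed distance via $f_{\Delta x}$ on each piece. The paper's own argument is only two lines and leaves essentially everything you spelled out implicit---it does not explicitly identify the unsafe witness $t^\ast$, does not write the integral displacement bound, and does not perform the case split on which side of $t$ the witness lies; it simply asserts $\sup_{\tau\in[a,t]}\sd(\xi(\tau),\unsafeset)\le f_{\Delta x}([a,t])$ (and similarly on $[t,b]$) without justification, and even overloads $t$ as both a bound and a free variable. Your version supplies exactly the missing reasoning (witness, dynamics integral, interval monotonicity of $f_{\Delta x}$, boundary-crossing to pass from displacement to signed distance), so it is strictly more complete while remaining aligned with the paper's intended route.
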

\begin{proof}
	\begin{align*}
		\sup_{t\in[a,b]} \sd(\traj(t), \unsafeset) &= \max\Big(\sup_{\tau\in[a,t]}\sd(\xi(\tau), \unsafeset),\sup_{\tau\in[t,b]}\sd(\xi(\tau), \unsafeset)\Big)\\
		&\le \max\big(f_{\Delta x}([a,t]),f_{\Delta x}([t,b])\big)
	\end{align*}
\end{proof}

We introduce two assumptions, which are also illustrated in Figures \ref{fig:assumptions1} and \ref{fig:assumptions2} for clarity:

\noindent\textbf{Assumption 1}: The unsafe set $\unsafeset$ is aligned with the grid (i.e. there does not exist a grid cell $\grid$ containing both safe and unsafe states).

\noindent\textbf{Assumption 2}: The time discretization of the unsafe trajectory $\xi:[0,T]\rightarrow\statespace$, $\{t_1, \ldots, t_N\},t_i\in[0,T]$, for all $i$, is chosen such that there exists at least one discretization point for each cell that the continuous trajectory passes through (i.e. if $\exists t \in [0, T]$ such that $\xi(t) \in z$, then $\exists t_i \in \{t_1, \ldots, t_N\}$ such that $\traj(t_i)\in z$.
\begin{figure}
	\centering
	\includegraphics{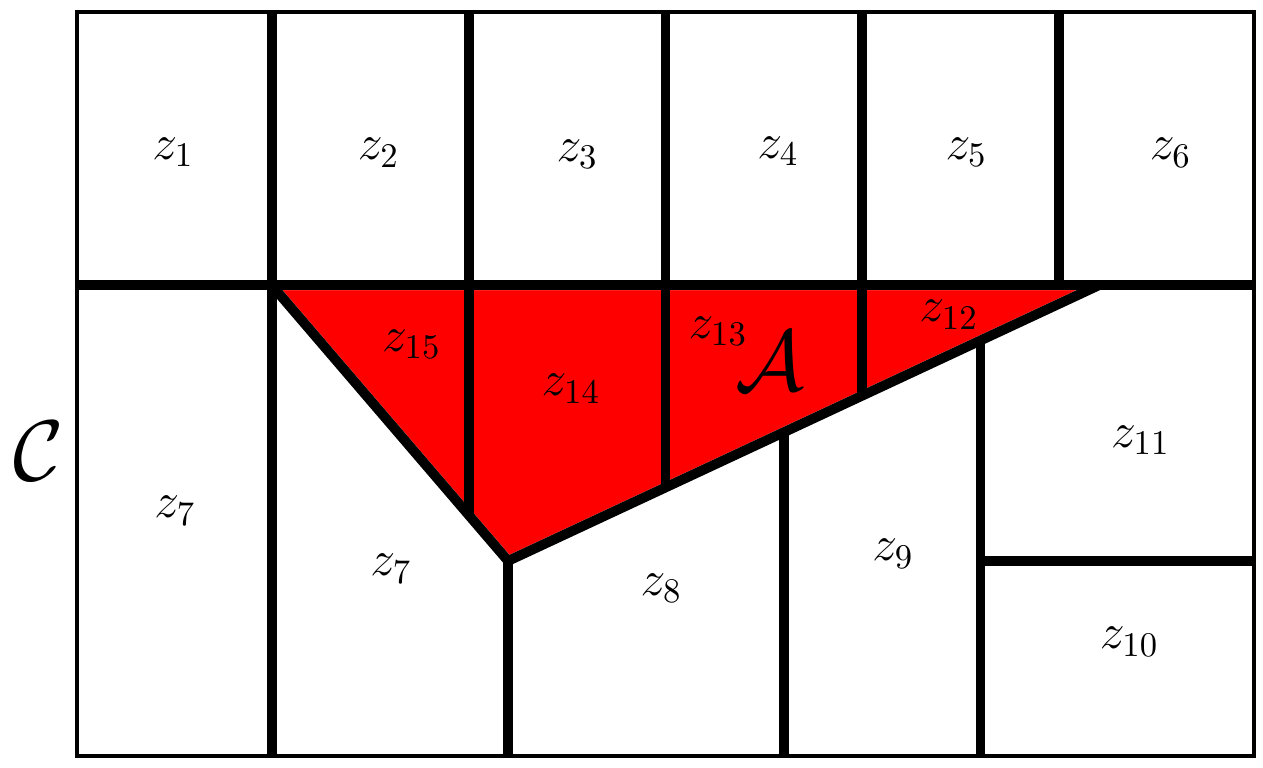}
	\caption{Illustration of Assumption 1 - all grid cells are either fully contained by $\unsafeset$ or $\unsafeset^c$.}
	\label{fig:assumptions1}
\end{figure}
\begin{figure}
	\centering
	\includegraphics{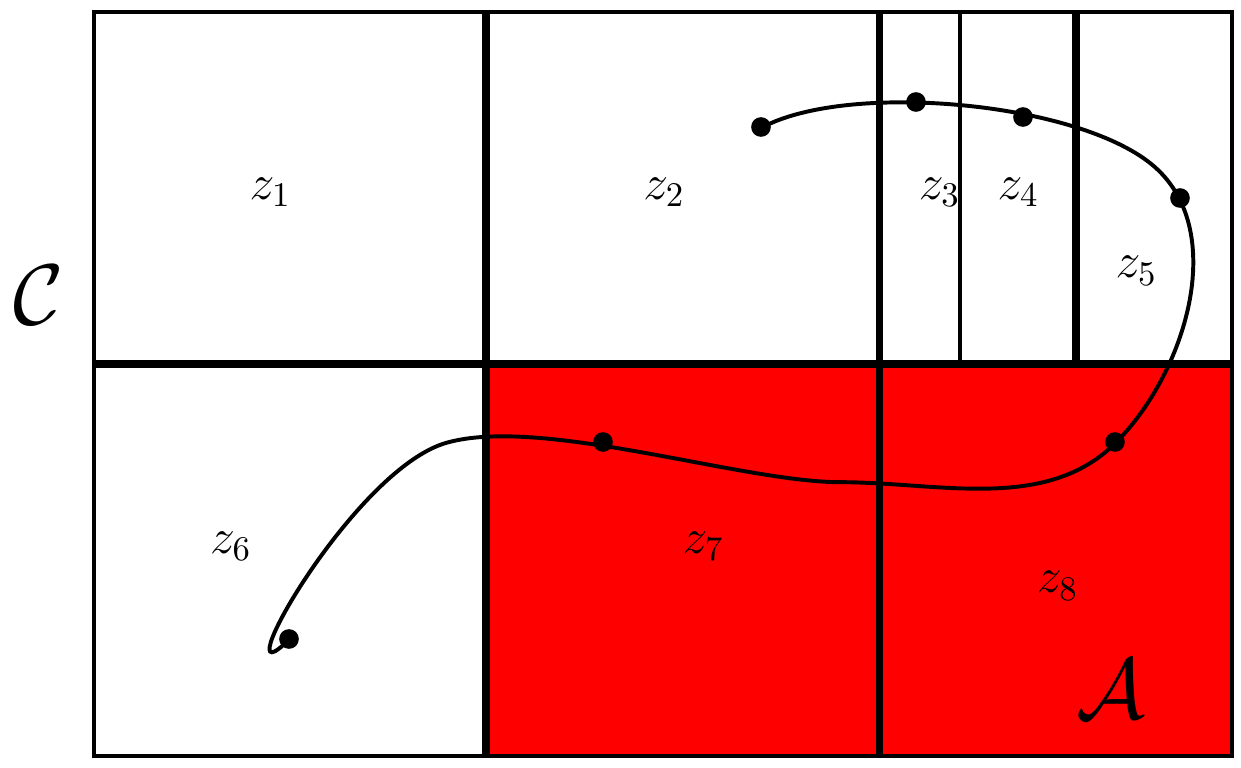}
	\caption{Illustration of Assumption 2: each cell $\grid$ that the trajectory passes through must have a time discretization point (shown as a dot).}
	\label{fig:assumptions2}
\end{figure}

We also introduce a convention for tie-breaking in Problems 2-4. Suppose there exists an unsafe trajectory $\traj$ for which a safe cell $z$ is incorrectly learned guaranteed unsafe. If a demonstration is added to the optimization problem which marks cell $z$ as safe, to avoid infeasibility, we remove the unsafe trajectory $\traj$ from the optimization problem.

\begin{theorem}[Continuous-to-discrete time conservativeness]\label{thm:app_c2d}
	Suppose that both Assumptions 1 and 2 hold. Then, the learned guaranteed unsafe set $\unsafesurelearned$, defined in Section \ref{sec:integer_program}, is contained within the true unsafe set $\unsafeset$.
	
	Now, suppose that only Assumption 1 holds. Furthermore, suppose that Problems 2-4 are using $M$ sub-trajectories sampled with Algorithm \ref{alg:hnr} as unsafe trajectories, and that each sub-trajectory is defined over the time interval $[a_i, b_i], i = 1,\ldots,M$. Denote $[a^*, b^*]\doteq [a_j, b_j]$, where $j = \max_{i} f_{\Delta x}([a_i, b_i])$.
	Then, the learned guaranteed unsafe set $\unsafesurelearned$ is contained within the $f_{\Delta x}([a^*,b^*])$-padded unsafe set, $\unsafeset(f_{\Delta x}([a^*,b^*]))$.
	
\end{theorem}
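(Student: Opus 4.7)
My plan is to work directly from the definition of $\unsafesurelearned$: a cell $z$ lies in $\unsafesurelearned$ precisely when forcing $\occ(z)=0$ renders Problem \ref{prob:feasibility_program} infeasible. Hence, to prove a containment $\unsafesurelearned \subseteq S$, it suffices to exhibit, for every cell $z \not\subseteq S$, a single feasible assignment to Problem \ref{prob:feasibility_program} with $\occ(z)=0$. The natural candidate under Assumption 1 is the ground-truth occupancy $\occ^\star(z') \doteq \mathbf{1}\{z' \subseteq \unsafeset\}$, which is well-defined because Assumption 1 prevents any cell from straddling $\partial\unsafeset$.

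\textbf{Part 1 (both assumptions).} First I would check that $\occ^\star$ satisfies every equality constraint from the safe demonstrations: a safe demonstration lies entirely in $\safeset$, and under Assumption 1 each cell it traverses is entirely safe, so $\occ^\star = 0$ on all of them. Next, for each sampled unsafe trajectory $\xi_{\neg s_k}$, its underlying continuous curve must enter $\unsafeset$ at some time (otherwise it would not be unsafe); by Assumption 1 the entry cell $z^\star$ is entirely unsafe, so $\occ^\star(z^\star)=1$, and by Assumption 2 some discretization waypoint of $\xi_{\neg s_k}$ lies in $z^\star$, causing $z^\star$ to contribute to the $\ge 1$ sum. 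Thus $\occ^\star$ is feasible. Since $\occ^\star(z)=0$ on every safe cell, no such cell can lie in $\unsafesurelearned$, which yields $\unsafesurelearned \subseteq \unsafeset$.

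\textbf{Part 2 (Assumption 1 only).} Dropping Assumption 2, $\occ^\star$ may now violate some $\ge 1$ constraint, since the continuous trajectory can enter $\unsafeset$ strictly between discretization waypoints. Suppose $z \in \unsafesurelearned$ with $z \not\subseteq \unsafeset$; by Assumption 1, $z \subseteq \safeset$, and I must show $z \subseteq \unsafeset(f_{\Delta x}([a^*,b^*]))$. Because every feasible assignment has $\occ(z)=1$, there must exist a sampled sub-trajectory $\xi_{\neg s_k}$ over $[a_k,b_k]$ that visits $z$ at some discretization time $t_i$: otherwise, starting from any feasible assignment, flipping $\occ(z)$ to $0$ would preserve all constraints, contradicting $z \in \unsafesurelearned$. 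Now I apply Lemma \ref{lem:app_maxdist} to the unsafe sub-trajectory $\xi_{\neg s_k}$: the waypoint $\xi_{\neg s_k}(t_i)$ satisfies $\sd(\xi_{\neg s_k}(t_i), \unsafeset) \le f_{\Delta x}([a_k, b_k]) \le f_{\Delta x}([a^*, b^*])$, so $\xi_{\neg s_k}(t_i) \in \unsafeset(f_{\Delta x}([a^*, b^*]))$, and under Assumption 1 the cell $z$ containing this waypoint inherits the same containment.

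\textbf{Main obstacle.} The most delicate step is the last one: converting a bound on signed distance at one waypoint into containment of the entire surrounding grid cell $z$ in the normal-offset padded set $\unsafeset(\gamma)$. This rests on (i) the Lipschitz-boundary assumption, which guarantees a well-defined nearest-point projection onto $\partial\unsafeset$, combined with the sub-gradient normal convention of Definition \ref{def:app_thickness} to identify metric distance with normal offset, and (ii) the tacit assumption that cells are small relative to $f_{\Delta x}([a^*,b^*])$ so the padding absorbs any residual cell diameter (or equivalently, that the conclusion is read cell-wise). I would also invoke the tie-breaking convention stated immediately before the theorem to ensure Problem \ref{prob:feasibility_program} is not vacuously infeasible when the discretization misses an unsafe cell, so that $\unsafesurelearned$ remains a meaningful object in Part 2.
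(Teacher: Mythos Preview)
Your proposal is correct and follows essentially the same route as the paper: Part 1 argues that under Assumptions 1--2 every sampled unsafe trajectory has a discretization waypoint landing in a genuinely unsafe cell, and Part 2 invokes Lemma \ref{lem:app_maxdist} to bound the signed distance of any such waypoint by $f_{\Delta x}([a^*,b^*])$ when Assumption 2 is dropped. Your feasibility-witness framing (exhibiting the ground-truth occupancy $\occ^\star$ as a certificate that forcing $\occ(z)=0$ remains feasible) is a slightly more explicit formalization of the paper's argument, and the cell-versus-waypoint containment gap you flag in your ``Main obstacle'' is present in the paper's proof as well, which likewise concludes containment in the padded set from the signed-distance bound at a single point without separately accounting for cell diameter or the normal-offset definition of $\unsafeset(\gamma)$.
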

\begin{proof}
	Let's prove the case where both Assumptions 1 and 2 hold. By Assumption 1, all cells $\grid$ which contain unsafe states $\state \in \unsafeset$ must be fully contained in the unsafe set: $\grid\in\unsafeset$. Now, suppose there exists a trajectory $\traj:[0,T]\rightarrow \statespace$ which is unsafe (i.e it satisfies the known constraints and has lower cost than a demonstration). Then, there exists at least one $t\in[0,T]$ such that $\traj(t)\in\unsafeset$. By Assumption 2, there exists a discretization point $t_i\in[0,T]$ such that $\traj(t_i)$ lies within some cell $\grid$, and $\grid \in \unsafeset$ by Assumption 1. Hence, we will only learn grid cells within $\unsafeset$ to be unsafe: $\unsafesurelearned\subseteq\unsafeset$.
	
	Let's prove the case where only Assumption 1 holds. Suppose in this case, there exists a cell $\grid \notin \unsafeset$ which is truly safe, but for which we have no demonstration that says cell $\grid$ is safe. Now, suppose there exists an unsafe trajectory $\traj([a^*, b^*])$ passing through $\grid$ which violates Assumption 2. Suppose that $\traj(t_i)\in\grid$, and $\{t_1, \ldots, t_N\}$ is chosen such that for all $j \in \{1, \ldots, N\}\setminus \{i\}$, $\traj(t_i)$ belongs to a known safe cell. Then, we may incorrectly learn that $\grid\in\unsafesurelearned$, as we force at least one point in the sampled trajectory to be unsafe. Via Lemma \ref{lem:app_maxdist}, we know that $\traj(t_i)$ is at most $\max(f_{\Delta x}([a^*,t_i]), f_{\Delta x}([t_i,b^*]))$ signed distance away from $\unsafeset$. For this trajectory and choice of $t_i$, any learned guaranteed unsafe state must be contained in the $\max(f_{\Delta x}([a^*,t_i]), f_{\Delta x}([t_i,b^*]))$-padded unsafe set. For this to hold for all choices of $t_i$, we must pad the unsafe set by $\max_{t_i\in[a^*,b^*]}\big(\max(f_{\Delta x}([a^*,t_i]), f_{\Delta x}([t_i,b^*]))\big)$, which is bounded by $f_{\Delta x}([a^*,b^*])$.
\end{proof}
\begin{rem}
	In practice, we observe that the bound in Theorem \ref{thm:app_c2d} when using only Assumption 1 is quite conservative, and as more demonstrations are added to the optimization, using the tie-breaking rule described previously removes the overapproximations described by Theorem \ref{thm:app_c2d}. Furthermore, though the experiments are implemented using only Assumption 1, ensuring Assumption 2 also holds is straightforward as long as the grid cells are large enough such that finding a sufficiently fine time-discretization is efficient.
\end{rem}
\begin{corollary}[Continuous-to-discrete feature space conservativeness]\label{thm:app_feature}
	Let the feature mapping $\phi(\state)$ from the state space to the constraint space be Lipschitz continuous with Lipschitz constant $L$. Then, under Assumptions 1 and 2 used in Theorem \ref{thm:app_c2d}, our method recovers a subset of the $Lf_{\Delta x}([a^*,b^*])$-padded unsafe set in the feature space, $\unsafeset(Lf_{\Delta x}([a^*,b^*]))$, where $[a^*,b^*]$ is as defined in Theorem \ref{thm:app_c2d}.
\end{corollary}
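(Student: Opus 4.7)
The plan is to reduce the feature-space claim to the state-space bound of Theorem \ref{thm:app_c2d} and then push the resulting inequality through the Lipschitz map $\phi$. First I would invoke Theorem \ref{thm:app_c2d}: any cell $z$ in the constraint space that our algorithm labels guaranteed unsafe must contain at least one sampled trajectory waypoint $\phi(\xi(t_i))$, and by the Lemma \ref{lem:app_maxdist} analysis carried out there, the preimage state $\xi(t_i)$ lies at signed distance at most $f_{\Delta x}([a^*, b^*])$ from the state-space unsafe set $\unsafeset$. Equivalently, there exists $y \in \partial \unsafeset$ (or $y \in \unsafeset$, giving distance $0$) with $\Vert \xi(t_i) - y \Vert \le f_{\Delta x}([a^*, b^*])$.

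Next I would transport this distance inequality into the constraint space. By $L$-Lipschitz continuity of $\phi$,
\begin{equation*}
\Vert \phi(\xi(t_i)) - \phi(y) \Vert \;\le\; L\,\Vert \xi(t_i) - y \Vert \;\le\; L\, f_{\Delta x}([a^*, b^*]).
\end{equation*}
Since $y \in \unsafeset$, the image $\phi(y)$ lies in the feature-space unsafe set. Therefore the waypoint $\phi(\xi(t_i))$, and hence the whole cell $z$ that contains it under Assumption~1 (which is preserved in the feature-space grid since cells are not split by $\partial \unsafeset$), lies within Euclidean distance $L f_{\Delta x}([a^*, b^*])$ of the feature-space unsafe set. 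Using Definitions \ref{def:app_offset}--\ref{def:app_buffered_set}, this is exactly the statement that the cell belongs to $\unsafeset(L f_{\Delta x}([a^*, b^*]))$, so $\unsafesurelearned \subseteq \unsafeset(L f_{\Delta x}([a^*, b^*]))$ in the feature space.

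The main obstacle is bookkeeping around the $\gamma$-padding definition: Definition~\ref{def:app_buffered_set} pads outward along the boundary normals of $\partial \unsafeset$, whereas the Lipschitz estimate naturally yields a bound on the (unsigned) Euclidean distance to $\unsafeset$. For sets with a Lipschitz boundary, these two notions coincide in the exterior: any point $p$ with $\sd(p, \unsafeset) \in [0, \gamma]$ can be written as $y + d \hat n(y)$ for some $y \in \partial \unsafeset$ and $d \in [0,\gamma]$, possibly using a sub-gradient normal at non-smooth boundary points as already accommodated in Definition~\ref{def:app_thickness}. Once this identification is made, the two-line chain above closes the proof; all remaining ingredients (sub-trajectory indexing, tie-breaking, Assumption~2 choice of $t_i$) are inherited unchanged from Theorem~\ref{thm:app_c2d}, since they concern the trajectory $\xi$ and its discretization, which live upstream of $\phi$.
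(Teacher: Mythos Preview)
Your proposal is correct and follows essentially the same route as the paper: invoke the state-space bound from Theorem~\ref{thm:app_c2d} (via Lemma~\ref{lem:app_maxdist}) to place the offending waypoint within distance $f_{\Delta x}([a^*,b^*])$ of $\unsafeset$, then apply the Lipschitz inequality $\Vert \phi(x)-\phi(y)\Vert \le L\Vert x-y\Vert$ to obtain the $Lf_{\Delta x}([a^*,b^*])$ padding in the constraint space. The paper's own argument is a terse three-sentence version of exactly this; your additional bookkeeping about reconciling the normal-direction padding of Definition~\ref{def:app_buffered_set} with the Euclidean-distance bound is a welcome elaboration that the paper leaves implicit.
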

\begin{proof}
	From the definition of Lipschitz continuity, $\Vert \phi(x) - \phi(y) \Vert \le L \Vert x - y \Vert$. From Theorem \ref{thm:app_c2d}, the unsafe set estimate is a subset of the $f_{\Delta x}([a^*,b^*])$-expanded estimate in the continuous space case. Using Lipschitz continuity, the value in the feature can at most change by $Lf_{\Delta x}([a^*,b^*])$ from the boundary of the true constraint set to the boundary of the padded set; hence, the statement holds.
\end{proof}

\vspace{-20pt}
\section{Experimental details}\label{sec:app_experiments}
\vspace{-10pt}
\begin{table}
\centering
\smaller
\begin{tabular}{ | c || p{3.5cm} | c| c| } 
\hline
\textbf{Figure} & \textbf{Dynamics} &  \textbf{Ctrl. constraints} & \textbf{Cost function} \\ 
\hline
Fig. \ref{fig:experiment_tot}, Row 1 & $x_{t+1} = x_t + u_t$ & $\Vert u_t \Vert \le 0.5$ & $\sum_{t=1}^{T-1} \Vert \control_t \Vert_2^2$ \\ 
\hline
Fig. \ref{fig:experiment_tot}, Row 2 & $x_{t+1} = Ax_t + Bu_t$, $A \doteq \exp\Bigg(\textrm{diag}\Bigg(\begin{bmatrix}0 & 1\\ 0 & 0\end{bmatrix}, \begin{bmatrix}0 & 1\\ 0 & 0\end{bmatrix}\Bigg)\Bigg)$, $B \doteq \displaystyle\int_0^1 \exp(A\tau) d\tau \begin{bmatrix}0 & 1 & 0 & 1\end{bmatrix}^\top$ & $|\control_t| \le \begin{bmatrix}20 & 10\end{bmatrix}^\top$ & $\sum_{i=1}^{T-1} \Vert \state_i - \state_{i+1} \Vert_2^2$ \\\hline
Fig. \ref{fig:experiment_tot}, Row 3 & $\dot x = \begin{bmatrix}
	\cos(\theta) & \sin(\theta) & \control\end{bmatrix}^\top$ & $|u| \le 1$ & $\sum_i \tau_{u_i}$\\\hline
Fig. \ref{fig:suboptimal} & $\dot x = \begin{bmatrix}
	\cos(\theta) & \sin(\theta) & \control\end{bmatrix}^\top$ & $|u| \le 1$ & $\sum_i \tau_{u_i}$\\\hline
Fig. \ref{fig:transfer} & $\dot x = \begin{bmatrix}
	\cos(\theta) & \sin(\theta) & \control\end{bmatrix}^\top$ & $|u| \le 1$ & $\sum_i \tau_{u_i}$\\\hline
\end{tabular}
\caption{Dynamics, control constraints, and cost functions used in experiments. }
\label{table:experiment_dynamics}
\end{table}
\begin{table}
\centering
\smaller
\vspace{-10pt}
\begin{tabular}{ | c || c|c| } 
\hline
\textbf{Figure} & \textbf{Timing (sampling trajectories)} & \textbf{Timing (constraint recovery)}\\ 
\hline
Fig. \ref{fig:experiment_tot}, Row 1 & 11.5 $\textrm{min}$ & 3 $\textrm{min}$\\ 
\hline
Fig. \ref{fig:experiment_tot}, Row 2 & 4.5 $\textrm{min}$ & 4.5 $\textrm{min}$\\\hline
Fig. \ref{fig:experiment_tot}, Row 3 & 2 $\textrm{hrs}$ &$4 \textrm{ min}$ \\\hline
Fig. \ref{fig:suboptimal} & 1 $\textrm{hr}$&$2 \textrm{ min}$ \\\hline
Fig. \ref{fig:transfer} & 30 $\textrm{min}$&$4 \textrm{ min}$ \\\hline
\end{tabular}
\caption{Approximate runtime. }
\label{table:experiment_dynamics}
\end{table}
\vspace{-10pt}
Here, $\sum_i \tau_{u_i}$ is the total time duration of applied control input (i.e. the time it took to go from start to goal). All experiments were conducted on a 4-core 2017 Macbook Pro with a 3.1 GHz Core i7. All code was implemented in MATLAB.

\begin{table}
\centering
\smaller
\begin{tabular}{ | c || c|c|c|c|c| } 
\hline
 & Fig. \ref{fig:experiment_tot}, Row 1 & Fig. \ref{fig:experiment_tot}, Row 2 & Fig. \ref{fig:experiment_tot}, Row 3 & Fig. \ref{fig:suboptimal} & Fig. \ref{fig:transfer} \\ 
\hline
Space discretization & 0.1 & 0.25 & 0.5 & 1 & 1\\ \hline
Number of trajectories & 300000 & 150000 & 10000 & 10000 & 10000\\\hline
$\varepsilon$ &n/a & n/a& $10^{-3}$&$10^{-3}$ &$10^{-3}$\\\hline
$\hat\varepsilon$ &n/a & n/a& $10^{-2}$&$10^{-2}$ &$10^{-2}$\\\hline
$\alpha_c$ & $10^{10}$& $10^4$& 1 &1 &1\\\hline
Minimum $\mathcal{L}$ length & n/a & n/a & $10^{-10}$ & $10^{-10}$ & $10^{-10}$\\\hline
$f_{\Delta x}^\textrm{max}$ & n/a & n/a & 4.53 & 6&6.96\\\hline
\end{tabular}
\caption{Parameters for each experiment. }
\label{table:experiment_dynamics}
\end{table}

\end{document}